\newcommand{\R}{\mathbb{R}}
\newcommand{\E}{\mathbb{E}}
\newcommand{\inner}[1]{\langle #1\rangle}
\newcommand{\tu}[1]{#1^{l}}
\newcommand{\tp}[1]{#1^{l+1}}
\newcommand{\mcol}[2]{\left(\begin{array}{c} #1 \\ #2\end{array}\right)}
\newcommand{\msqr}[4]{\left(\begin{array}{cc} #1 & #2 \\ #3 & #4\end{array}\right)}
\newcommand{\w}{\omega} %
\newcommand{\gdpp}{GD$^{++}$}
\newcommand{\constrr}{\textsc{ConstRR}}
\newcommand{\adarr}{\textsc{AdaRR}}
\newcommand{\tunedrr}{\textsc{TunedRR}} 
\newcommand{\full}{\textsc{Full}} 
\newcommand{\diag}{\textsc{Diag}}
\theoremstyle{plain}
\newtheorem{theorem}{Theorem}[section]
\newtheorem{lemma}[theorem]{Lemma}
\theoremstyle{definition}
\theoremstyle{remark}
\title{Linear Transformers are Versatile In-Context Learners}
\author{%
  Max Vladymyrov \\
  Google Research\\
  \texttt{mxv@google.com} \\
  \And
  Johannes von Oswald \\
  Google, Paradigms of Intelligence Team \\
  \texttt{jvoswald@google.com} \\
  \AND
  Mark Sandler \\
  Google Research \\
  \texttt{sandler@google.com} \\
  \And
  Rong Ge \\
  Duke University \\
  \texttt{rongge@cs.duke.edu} \\
}
\begin{document}

\maketitle

\begin{abstract}
Recent research has demonstrated that transformers, particularly linear attention models, implicitly execute gradient-descent-like algorithms on data provided in-context during their forward inference step. However, their capability in handling more complex problems remains unexplored. In this paper, we prove that each layer of a linear transformer maintains a weight vector for an implicit linear regression problem and can be interpreted as performing a variant of preconditioned gradient descent. We also investigate the use of linear transformers in a challenging scenario where the training data is corrupted with different levels of noise. Remarkably, we demonstrate that for this problem linear transformers discover an intricate and highly effective optimization algorithm, surpassing or matching in performance many reasonable baselines. We analyze this algorithm and show that it is a novel approach incorporating momentum and adaptive rescaling based on noise levels. Our findings show that even linear transformers possess the surprising ability to discover sophisticated optimization strategies.
\end{abstract}

\section{Introduction}
\label{sec:intro}

The transformer architecture \citep{vaswani2017attention} has revolutionized the field of machine learning, driving breakthroughs across various domains and serving as a foundation for powerful models~\citep{anil2023palm, achiam2023gpt, team2023gemini, jiang2023mistral}. However, despite their widespread success, the mechanisms that drive their performance remain an active area of research. A key component of their success is attributed to in-context learning (ICL, \citealp{brown2020language}) -- an emergent ability of transformers to make predictions based on information provided within the input sequence itself, without explicit parameter updates.

Recently, several papers \citep{garg2022can, akyurek, oswald} have suggested that ICL might be partially explained by an implicit meta-optimization of the transformers that happens on input context (aka mesa-optimization \citealp{mesa}). They have shown that transformers with linear self-attention layers (aka linear transformers) trained on linear regression tasks can internally implement gradient-based optimization.

Specifically, \citet{oswald} demonstrated that linear transformers can execute iterations of an algorithm similar to the gradient descent algorithm (which they call \gdpp{}), with each attention layer representing one step of the algorithm. Later, \citet{ahn2023transformers, zhang2023trained} further characterized this behavior, showing that the learned solution is a form of preconditioned GD, and this solution is optimal for one-layer linear transformers.

In this paper, we continue to study linear transformers trained on linear regression problems. We prove that each layer of every linear transformer maintains a weight vector for an underlying linear regression problem. Under some restrictions, the algorithm it runs can be interpreted as a complex variant of preconditioned gradient descent with momentum-like behaviors.

While maintaining a linear regression model (regardless of the data) might seem restrictive, we show that linear transformers can discover powerful optimization algorithms. As a first example, we prove that in case of \gdpp{}, the preconditioner results in a second order optimization algorithm.

Furthermore, we demonstrate that linear transformers can be trained to uncover even more powerful and intricate algorithms. We modified the problem formulation to consider mixed linear regression with varying noise levels\footnote{We consider a model where each sequence contains data with the same noise level, while different sequences have different noise levels.} (inspired by \citealp{bai2023transformers}). This is a harder and non-trivial problem with no obvious closed-form solution, since it needs to account for various levels of noise in the input. 

Our experiments with two different noise variance distributions (uniform and categorical) demonstrate the remarkable flexibility of linear transformers. Training a linear transformer in these settings leads to an algorithm that outperforms \gdpp{} as well as various baselines derived from the exact closed-form solution of the ridge regression. We discover that this result holds even when training a linear transformer with diagonal weight matrices. 

Through a detailed analysis, we reveal key distinctions from \gdpp{}, including momentum-like term and adaptive rescaling based on the noise levels.

Our findings contribute to the growing body of research where novel, high-performing algorithms have been directly discovered through the reverse-engineering of transformer weights. This work expands our understanding of the implicit learning capabilities of attention-based models and highlights the remarkable versatility of even simple linear transformers as in-context learners. We demonstrate that transformers have the potential to discover effective algorithms that may advance the state-of-the-art in optimization and machine learning in general.

\section{Preliminaries}
\label{sec:prelim}

In this section we introduce notations for linear transformers, data, and type of problems we consider.

\subsection{Linear transformers and in-context learning}

Given input sequence $e_1,e_2,..., e_n \in \R^{d+1}$, a single head in a linear self-attention layer is usually parameterized by four matrices, key $W_K$, query $W_Q$, value $W_V$ and projection $W_P$. The output of the non-causal layer at position $i$ is $e_i +\Delta e_i$ where $\Delta e_i$ is computed as
\begin{equation}
    \textstyle{\Delta e_i = W_P \left(\sum_{j=1}^n \inner{W_Q e_i, W_K e_j} W_V e_j\right).}
\end{equation}
Equivalently, one can use parameters $P = W_PW_V$ and $Q = W_K^\top W_Q$, and the equation becomes
\begin{equation}
    \textstyle{\Delta e_i = \sum_{j=1}^n (e_j^\top Q e_i) Pe_j.}
\end{equation}
Multiple heads $(P_1, Q_1), (P_2,Q_2),..., (P_h,Q_h)$ simply sum their effects
\begin{equation} \label{e:linear}
    \textstyle{\Delta e_i = \sum_{k=1}^H \sum_{j=1}^n (e_j^\top Q_k e_i) P_k e_j.}
\end{equation}
We define a \emph{linear transformer} as a multi-layer neural network composed of $L$ linear self-attention layers parameterized by $\theta = \{Q^l_k, P^l_k\}$ for $k=1\dots H, l=1\dots L$. To isolate the core mechanisms, we consider a simplified decoder-only architecture, excluding MLPs and LayerNorm components. This architecture was also used in previous work \citep{oswald, ahn2023transformers}.

We consider two versions of linear transformers: \full{} with the transformer parameters represented by full matrices and \diag{}, where the parameters are restricted to diagonal matrices only.

Inspired by \citet{oswald}, in this paper we consider regression data as the token sequence. Each token $e_i = (x_i, y_i) \in \R^{d+1}$ consists of a feature vector $x_i \in \R^d$ and its corresponding output $y_i\in \R$. Additionally, we append a query token $e_{n+1} = (x_t, 0)$ to the sequence, where $x_t \in \R^d$ represents test data. The goal of in-context learning is to predict $y_t$ for the test data $x_t$. We constrain the attention to only focus on the first $n$ tokens of the sequence so that it ignores the query token. 

We use $(\tu{x}_i,\tu{y}_i)$ to denote the $i$-th token in the transformer's output at layer $l$. The initial layer is simply the input: $(x^{0}_i,y^{0}_i) = (x_i,y_i)$. For a model with parameters $\theta$, we read out the prediction by taking the negative\footnote{We set the actual prediction to $-\tu{y}_{n+1}$, similar to \citet{oswald}, because it's easier for linear transformers to predict $-y_t$.} of the last coordinate of the final token in the last layer as $\hat{y}_\theta(\{e_1,...,e_n\},e_{n+1}) = - y^L_{n+1}$. 

Let's also define the following notation to be used throughout the paper

\vspace{-4ex}
\begin{align*}
  {\Sigma = \sum_{i=1}^n x_i(x_i)^\top;} & \quad {\alpha = \sum_{i=1}^n y_ix_i;} & {\lambda = \sum_{i=1}^n (y_i)^2} \\
  {\tu{\Sigma} = \sum_{i=1}^n \tu{x}_i(\tu{x}_i)^\top;} & \quad  {\tu{\alpha} = \sum_{i=1}^n \tu{y}_i\tu{x}_i;} & {\tu{\lambda} = \sum_{i=1}^n (\tu{y}_i)^2}
\end{align*}
\vspace{-3ex}

\subsection{Noisy regression model}

As a model problem, we consider data generated from a noisy linear regression model. For each input sequence $\tau$, we sample a ground-truth weight vector $w_\tau \sim N(0,I)$, and generate $n$ data points as $x_i \sim N(0,I)$ and $y_i = \inner{w_\tau, x_i} + \xi_i$, with noise $\xi_i\sim N(0,\sigma_\tau^2)$.

Note that each sequence can have different ground-truth weight vectors $w_\tau$, but every data point in the sequence shares the same $w_\tau$ and $\sigma_\tau$. The query is generated as $x_t \sim N(0,I)$ and $y_t = \inner{w_\tau, x_t}$ (since the noise is independent, whether we include noise in $y_q$ will only be an additive constant to the final objective).

We further define an ordinary least square (OLS) loss as

\vspace{-2ex}
\begin{equation} \label{e:lsq}
  \textstyle{L_{\text{OLS}}(w) = \sum_{i=1}^n \left(y_i-\inner{w,x_i}\right)^2.}
\end{equation} 
\vspace{-2ex}

The OLS solution is $w^* := \Sigma^{-1}\alpha$ with residuals $r_i:= y_i - \inner{w^*,x_i}$. 

In the presence of noise $\sigma_\tau$, $w^*$ in general is not equal to the ground truth $w_\tau$. For a \emph{known} noise level $\sigma_\tau$, the best estimator for $w_\tau$ is provided by ridge regression:

\vspace{-2ex}
\begin{equation} \label{e:rr}
  \textstyle{L_{\text{RR}}(w) = \sum_{i=1}^n \left(y_i-\inner{w,x_i}\right)^2 + \sigma_\tau^2\|w\|^2,}
\end{equation} 
\vspace{-2ex}

with solution $w^*_{\sigma^2} := \left(\Sigma + \sigma_\tau^2 I\right)^{-1}\alpha$. Of course, in reality the variance of the noise is not known and has to be estimated from the data.
\subsection{Fixed vs.\ mixed noise variance problems}

We consider two different problems within the noisy linear regression framework.
\vspace{-2ex}
\paragraph{Fixed noise variance.} In this scenario, the variance $\sigma_\tau$ remains constant for all the training data. Here, the in-context loss is:
\begin{equation} \label{e:fixed_noise_loss}
    L(\theta) = \underset{\substack{w_\tau \sim N(0,I) \\ x_i \sim N(0,I) \\ \xi_i\sim N(0,\sigma_\tau^2)}}{\E}\left[(\hat{y}_\theta(\{e_1,...,e_n\},e_{n+1}) - y_t)^2\right],
\end{equation}
where $e_i = (x_i, y_i)$ and $y_i = \inner{w_\tau, x_i} + \xi_i$. This problem was initially explored by \citet{garg2022can}. Later, \citet{oswald} have demonstrated that a linear transformer \eqref{e:fixed_noise_loss} converges to a form of a gradient descent solution, which they called \gdpp{}. We define this in details later.

\vspace{-2ex}
\paragraph{Mixed noise variance.} In this case, the noise variance $\sigma_\tau$ is drawn from some fixed distribution $p(\sigma_\tau)$ for each sequence. The in-context learning loss becomes:
\begin{equation} \label{e:mixed_noise_loss}
    L(\theta) = \underset{\substack{w_\tau \sim N(0,I) \\ x_i \sim N(0,I) \\ \xi_i\sim N(0,\sigma_\tau^2) \\ \sigma_\tau \sim p(\sigma_\tau)}}{\E}\left[(\hat{y}_\theta(\{e_1,...,e_n\},e_{n+1}) - y_t)^2\right].
\end{equation}
In other words, each training sequence $\tau$ has a fixed noise level $\sigma_\tau$, but different training sequences have different noise levels sampled from a specified distribution $p(\sigma_\tau)$. This scenario adds complexity because the model must predict $w_\tau$ for changing noise distribution, and the optimal solution likely would involve some sort of noise estimation. We have found that empirically, \gdpp{} fails to model this noise variance and instead converges to a solution which can be interpreted as a single noise variance estimate across all input data.

\vspace{-2ex}
\section{Related work}

\paragraph{In-context Learning as Gradient Descent} Our work builds on research that frames in-context learning as (variants of) gradient descent \citep{akyurek, oswald}. For 1-layer linear transformer, several works \cite{zhang2023trained,mahankali2023one, ahn2023transformers} characterized the optimal parameters and training dynamics. More recent works extended the ideas to auto-regressive models \citep{li2023transformers, von2023uncovering} and nonlinear models \citep{cheng2023transformers}.  \citet{fu2023transformers} noticed that transformers perform similarly to second-order Newton methods on linear data, for which we give a plausible explanation in Theorem~\ref{thm:2ndorder}.

\vspace{-2ex}
\paragraph{In-context Learning in LLMs} There are also many works that study how in-context learning works in pre-trained LLMs \citep{kossen2023context, wei2023larger,hendel2023context,shen2023pretrained}. Due to the complexity of such models, the exact mechanism for in-context learning is still a major open problem. Several works \citep{olsson2022context, chan2022data, Akyurek2024-gp} identified induction heads as a crucial mechanism for simple in-context learning tasks, such as copying, token translation and pattern matching.

\vspace{-2ex}
\paragraph{Other theories for training transformers} Other than the setting of linear models, several other works \citep{garg2022can,tarzanagh2023max,li2023transformers, huang2023context,tian2023scan,tian2023joma} considered optimization of transformers under different data and model assumptions. \cite{wen2023transformers} showed that it can be difficult to interpret the ``algorithm'' performed by transformers without very strong restrictions.

\vspace{-2ex}
\paragraph{Mixed Linear Models} Several works observed that transformers can achieve good performance on a mixture of linear models \citep{bai2023transformers, pathak2023transformers,yadlowsky2023pretraining}. While these works show that transformers {\emph can} implement many variants of model-selection techniques, our result shows that linear transformers solve such problems by discovering interesting optimization algorithm with many hyperparameters tuned during the training process. Such a strategy is quite different from traditional ways of doing model selection. Transformers are also known to be able to implement strong algorithms in many different setups \citep{guo2023transformers,giannou2023looped}.

\vspace{-2ex}
\paragraph{Effectiveness of linear and kernel-like transformers} A main constraint on transformer architecture is that it takes $O(N^2)$ time for a sequence of length $N$, while for a linear transformer this can be improved to $O(N)$. \citet{mirchandani2023large} showed that even linear transformers are quite powerful for many tasks. Other works  \citep{katharopoulos2020transformers, wang2020linformer, schlag2021linear, choromanski2020rethinking} uses ideas similar to kernel/random features to improve the running time to almost linear while not losing much performance.

\vspace{-2ex}
\section{Linear transformers maintain linear regression model at every layer}
\label{sec:linearupdate}

While large, nonlinear transformers can model complex relationship, we show that linear transformers are restricted to maintaining a linear regression model based on the input, in the sense that the $l$-th layer output is always a linear function of the input with latent (and possibly nonlinear) coefficients.

\newpage

\begin{theorem}
    \label{thm:mainlinear}
    Suppose the output of a linear transformer at $l$-th layer is $(\tu{x}_1, \tu{y}_1), (\tu{x}_2, \tu{y}_2), ..., (\tu{x}_n, \tu{y}_n), (\tu{x}_t, \tu{y}_t)$, then there exists matrices $\tu{M}$, vectors $\tu{u}, \tu{w}$ and scalars $\tu{a}$ such that
    \begin{align*}
        \tp{x}_i & = \tu{M} x_i + y_i \tu{u}, & \tp{x}_t & = \tu{M} x_t, \\
        \tp{y}_i & = \tu{a} y_i - \inner{\tu{w},x_i},&  \tp{y}_t & = - \inner{\tu{w},x_t}.
    \end{align*}
\end{theorem}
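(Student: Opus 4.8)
The plan is to prove the following equivalent and slightly cleaner statement by induction on the layer index $l$: writing $e^l_i := (\tu{x}_i,\tu{y}_i)$ for the $l$-th layer token, and setting $\tilde e_i := (x_i,y_i)$ for $i=1,\dots,n$ and $\tilde e_t := (x_t,0) = e_{n+1}$ for the appended query, there is a single matrix $T^l \in \R^{(d+1)\times(d+1)}$ (depending on the data but not on the token index) with $e^l_i = T^l \tilde e_i$ for every $i \in \{1,\dots,n,t\}$. Given this, the theorem follows by reading off the blocks of $T^{l+1}$: write it with a $d\times d$ block $M$, a top-right column $u\in\R^d$, a bottom-left row $-w^\top$ with $w\in\R^d$, and a bottom-right scalar $a$ (any $(d+1)\times(d+1)$ matrix decomposes uniquely this way). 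Then for $i\le n$, since the last coordinate of $\tilde e_i$ is $y_i$, we get $\tp{x}_i = M x_i + y_i u$ and $\tp{y}_i = a y_i - \inner{w,x_i}$; for the query, the last coordinate of $\tilde e_t$ is $0$, which kills the $u$ and $a$ contributions and leaves $\tp{x}_t = M x_t$ and $\tp{y}_t = -\inner{w,x_t}$ --- exactly the claimed identities, with $(\tu{M},\tu{u},\tu{w},\tu{a})$ being the four blocks of $T^{l+1}$.

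The base case $l=0$ is $T^0 = I$, matching $(x^0_i,y^0_i)=(x_i,y_i)$ and the query token $(x_t,0)$. For the inductive step, the only point that needs care is that a linear self-attention layer acts on each token by one common, data-dependent linear map. Moving the scalar $(e^l_j)^\top Q_k e^l_i$ next to $e^l_i$ and collapsing the sum over $j$,
\[
  \Delta e^l_i = \sum_{k=1}^H \sum_{j=1}^n \bigl((e^l_j)^\top Q_k e^l_i\bigr) P_k e^l_j = \left(\sum_{k=1}^H P_k \left(\sum_{j=1}^n e^l_j (e^l_j)^\top\right) Q_k\right) e^l_i =: B^l e^l_i ,
\]
and the matrix $B^l$ depends only on the first $n$ layer-$l$ tokens, not on which $e^l_i$ is being updated; crucially it is the \emph{same} matrix for the query token $e^l_t$, because the query also attends only to the first $n$ tokens. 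Hence $e^{l+1}_i = (I+B^l)\,e^l_i$ for every $i \in \{1,\dots,n,t\}$, so composing with the inductive hypothesis $e^l_i = T^l \tilde e_i$ gives $e^{l+1}_i = (I+B^l)T^l \tilde e_i$, and $T^{l+1} := (I+B^l)T^l$ completes the induction. Multiple heads and multiple layers require no extra work: heads are summed inside $B^l$, and layers are exactly the induction steps.

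I do not expect a genuine obstacle here --- once the per-token update is recognized as a single shared linear map $I+B^l$, the proof is a short induction plus a block read-off. The two things to keep track of are: (i) restricting attention to the first $n$ tokens is precisely what makes $B^l$ independent of the updated token and shared with the query; and (ii) the special form of the query outputs $\tp{x}_t = M x_t$, $\tp{y}_t = -\inner{w,x_t}$ is a consequence solely of the initialization $e_{n+1} = (x_t,0)$ having a zero last coordinate, not of any special handling inside the layers.
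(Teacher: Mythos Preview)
Your proposal is correct and takes essentially the same approach as the paper: both argue by induction on $l$, the key observation being that a linear self-attention layer with attention restricted to the first $n$ tokens acts on every token --- including the query --- by one shared linear map $I+B^l$. The only cosmetic difference is that the paper unpacks the block structure of $(I+B^l)T^l$ explicitly (thereby also proving Lemma~\ref{lem:mainupdate}), whereas you keep the induction at the level of the full matrix $T^l$ and read off its blocks at the end.
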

\vspace{-1ex}

Note that $\tu{M}$, $\tu{u}, \tu{w}$ and $\tu{a}$ are not linear in the input, but this still poses restrictions on what the linear transformers can do. For example we show that it cannot represent a quadratic function:

\begin{theorem}\label{thm:noquadratic} Suppose the input to a linear transformer is $(x_1,y_1),(x_2,y_2),...,(x_n,y_n)$ where $x_i\sim N(0,I)$ and $y_i = w^\top x_i$, let the $l$-th layer output be $(\tu{x}_1, \tu{y}_1), (\tu{x}_2, \tu{y}_2), ..., (\tu{x}_n, \tu{y}_n)$ and let $\tu{y} = (\tu{y}_1,...,\tu{y}_n)$ and $y^* = (x_1(1)^2,x_2(1)^2,...,x_n(1)^2)$ (here $x_i(1)$ is just the first coordinate of $x_i$), then when $n \gg d$ with high probability the cosine similarity of $y^*$ and $\tu{y}$ is at most 0.1.
\end{theorem}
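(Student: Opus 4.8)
The plan is to combine the structural result of Theorem~\ref{thm:mainlinear} with elementary Gaussian moment computations and standard concentration. By Theorem~\ref{thm:mainlinear}, the layer-$l$ output has the form $\tu{y}_i = \tu{a}\, y_i - \inner{\tu{w}, x_i}$, where the scalar $\tu{a}$ and the vector $\tu{w}$ may depend (arbitrarily, even nonlinearly) on $x_1,\dots,x_n$ but are the \emph{same} for every $i$. Since the data is noiseless, $y_i = \inner{w,x_i}$, and hence $\tu{y}_i = \inner{v, x_i}$ with the single (data-dependent) vector $v := \tu{a}\,w - \tu{w}\in\R^d$. Collecting the feature vectors as the rows of a matrix $X\in\R^{n\times d}$ and writing $c\in\R^n$ for the vector with $c_i = x_i(1)^2$, we have $\tu{y} = Xv$ and $y^* = c$, so the cosine similarity in question equals $\bigl|\inner{X^\top c, v}\bigr| \,/\, (\|Xv\|\,\|c\|)$ (if $\tu{y}=0$ the claim is vacuous, so assume $Xv\ne 0$).

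The key point is that $v$ enters only through the single inner product $\inner{X^\top c, v}$, which Cauchy--Schwarz bounds by $\|X^\top c\|\,\|v\|$ regardless of how $v$ depends on the data; so we never need uniform control over the unit sphere. It therefore suffices to establish, with high probability over $x_1,\dots,x_n$, the three bounds $\|X^\top c\| \le C_1\sqrt{nd}$, $\sigma_{\min}(X) \ge \tfrac12\sqrt{n}$, and $\|c\| \ge \tfrac12\sqrt{n}$, since (using $\|Xv\| \ge \sigma_{\min}(X)\|v\|$) they give cosine similarity $\le 4C_1\sqrt{d/n}$, which is below $0.1$ once $n\ge Cd$ for a suitable absolute constant $C$ --- precisely the regime $n\gg d$.

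Each of the three bounds is routine. For the first, $(X^\top c)_j=\sum_{i=1}^n x_i(j)\,x_i(1)^2$ is a sum of $n$ i.i.d.\ terms with mean zero: for $j=1$ this is $\E[x(1)^3]=0$, and for $j\ne1$ it is $\E[x(j)]\,\E[x(1)^2]=0$, in both cases because the relevant Gaussian moment is odd. Each term has variance at most $\E[x(1)^6]=15$, so $\E\|X^\top c\|^2\le 15\,nd$, and concentration of $\|X^\top c\|^2$ about its mean (a second-moment/Markov argument suffices for a constant failure probability, a Bernstein-type tail for the sub-exponential-like summands for a polynomially small one) yields $\|X^\top c\|=O(\sqrt{nd})$. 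The bound on $\sigma_{\min}(X)$ is the standard estimate for an $n\times d$ matrix with i.i.d.\ standard Gaussian entries, $\sigma_{\min}(X)\ge\sqrt n-\sqrt d-t$ with high probability, which is $\ge\tfrac12\sqrt n$ when $n\gg d$. Finally, $\|c\|^2=\sum_i x_i(1)^4$ has mean $3n$ and concentrates, so $\|c\|\ge\tfrac12\sqrt n$ with high probability.

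I do not expect a real obstacle. Once Theorem~\ref{thm:mainlinear} reduces $\tu{y}$ to a linear functional of the features, the content of the statement is just that a linear function of a Gaussian vector is uncorrelated, at the population level, with the square of one of its coordinates --- the odd third moments $\E[x(1)^3]$ and $\E[x(j)x(1)^2]$ both vanish --- and the finite-sample version is immediate from concentration, with the decoupling of the data-dependent $v$ handled by a single application of Cauchy--Schwarz. The only thing needing mild care is tracking the constants so the $\sqrt{d/n}$ slack is genuinely below $0.1$; carrying them through gives an explicit threshold of the form $n\ge Cd$.
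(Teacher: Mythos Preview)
Your proposal is correct and follows essentially the same route as the paper: invoke Theorem~\ref{thm:mainlinear} to write $\tu{y}=Xv$ for a data-dependent $v$, bound $\|X^\top c\|=O(\sqrt{nd})$ via the vanishing odd Gaussian moments, and lower-bound $\|Xv\|$ and $\|c\|$ by $\Theta(\sqrt n)\|v\|$ and $\Theta(\sqrt n)$ respectively. Your handling of the data-dependence of $v$ via the explicit smallest-singular-value bound $\sigma_{\min}(X)\ge\tfrac12\sqrt n$ is in fact a bit more careful than the paper's proof, which asserts $\|\tu{y}\|=\Theta(\sqrt n)\|\tu{w}\|$ without flagging the uniformity issue.
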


Theorem~\ref{thm:mainlinear} implies that the output of linear transformer can always be explained as linear combinations of input with latent weights $\tu{a}$ and $\tu{w}$. %
The matrices $\tu{M}$, vectors $\tu{u}, \tu{w}$ and numbers $\tu{a}$ are not linear and can in fact be quite complex, which we characterize below:

\begin{lemma}\label{lem:mainupdate}
    In the setup of Theorem~\ref{thm:mainlinear}, if we let
    \begin{align*}
    & \left(\begin{array}{cc} \tu{A} & \tu{b} \\ (\tu{c})^\top & \tu{d}\end{array}\right) := \\
    & \sum_{k=1}^h \left[\tu{P}_k \sum_{j=1}^n \left(\mcol{\tu{x}_j}{\tu{y}_j} ((\tu{x}_j)^\top, \tu{y}_j)\right) \tu{Q}_k\right],
    \end{align*}
    then one can recursively compute matrices $\tu{M}$, vectors $\tu{u}, \tu{w}$ and numbers $\tu{a}$ for every layer using
    \begin{align*}
        \tp{M} & = (I+\tu{A})\tu{M} + \tu{b}(\tu{w})^\top \\
        \tp{u} & = (I+\tu{A})\tu{u} + \tu{a} \tu{b}\\
        \tp{a} & = (1+\tu{d})\tu{a} + \inner{\tu{c},\tu{u}} \\
        \tp{w} & = (1+\tu{d})\tu{w} - (\tu{M})^\top \tu{c},
    \end{align*}
    with the init.\ condition $a^{0} = 1, w^{0} = 0, M^{0} = I, u^{0}=0$.
\end{lemma}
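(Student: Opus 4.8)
The plan is to prove the identities by induction on the layer index $l$, propagating the representation of Theorem~\ref{thm:mainlinear} through one linear self-attention update. The base case $l=0$ is immediate: the layer-$0$ tokens are the raw inputs, $(x^0_i, y^0_i) = (x_i, y_i)$ and $(x^0_t, y^0_t) = (x_t, 0)$, so the asserted form holds with $M^0 = I$, $u^0 = 0$, $a^0 = 1$, $w^0 = 0$ --- exactly the stated initial condition.

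For the inductive step, the first move is to rewrite the multi-head update \eqref{e:linear} in matrix form. Since each attention weight $(\tu{e}_j)^\top \tu{Q}_k \tu{e}_i$ is a scalar, we may factor it out to get
\begin{equation*}
  \Delta \tu{e}_i \;=\; \sum_{k=1}^h \tu{P}_k\Bigl(\sum_{j=1}^n \tu{e}_j (\tu{e}_j)^\top\Bigr)\tu{Q}_k\,\tu{e}_i \;=\; \msqr{\tu{A}}{\tu{b}}{(\tu{c})^\top}{\tu{d}}\,\tu{e}_i ,
\end{equation*}
where $\tu{e}_i := \mcol{\tu{x}_i}{\tu{y}_i}$ and the $(d+1)\times(d+1)$ block matrix is precisely the one defined in the statement. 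The crucial observation is that the inner sum runs only over the first $n$ tokens, so this matrix does not depend on $i$; in particular the query token $\tu{e}_t := \mcol{\tu{x}_t}{\tu{y}_t}$ is updated by the same matrix. Writing $\tu{B}$ for this block matrix, we obtain $\tp{e}_i = (I+\tu{B})\tu{e}_i$ and $\tp{e}_t = (I+\tu{B})\tu{e}_t$.

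Reading off the two block rows gives, for the context tokens,
\begin{align*}
  \tp{x}_i &= (I+\tu{A})\,\tu{x}_i + \tu{b}\,\tu{y}_i , &
  \tp{y}_i &= \inner{\tu{c},\tu{x}_i} + (1+\tu{d})\,\tu{y}_i ,
\end{align*}
and the same two formulas with $\tu{x}_t, \tu{y}_t$ in place of $\tu{x}_i, \tu{y}_i$ for the query token. Now substitute the inductive hypothesis $\tu{x}_i = \tu{M} x_i + y_i \tu{u}$ and $\tu{y}_i = \tu{a} y_i - \inner{\tu{w}, x_i}$ and collect the result into a part proportional to $x_i$ and a part proportional to $y_i$. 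Matching the $x_i$- and $y_i$-coefficients of $\tp{x}_i$ against the target $\tp{M} x_i + y_i \tp{u}$ produces the recursions for $\tp{M}$ and $\tp{u}$; matching the $y_i$- and $x_i$-coefficients of $\tp{y}_i$ against $\tp{a} y_i - \inner{\tp{w}, x_i}$ produces those for $\tp{a}$ and $\tp{w}$, the minus sign in the $\tp{w}$ update arising from identifying the $x_i$-coefficient with $-(\tp{w})^\top$. Since $\tu{x}_t$ and $\tu{y}_t$ carry no $y_t$ term, the identical substitution yields $\tp{x}_t = \tp{M} x_t$ and $\tp{y}_t = -\inner{\tp{w}, x_t}$, so the layer-$(l+1)$ output again has the form of Theorem~\ref{thm:mainlinear} with the coefficients just computed, closing the induction.

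The computation is routine linear algebra; the two places that need care are (i) keeping the scalar attention weight $(\tu{e}_j)^\top \tu{Q}_k \tu{e}_i$ factored out, so that $\Delta \tu{e}_i$ is genuinely linear in $\tu{e}_i$ and a single block matrix captures the entire layer, and (ii) exploiting that attention is confined to the first $n$ positions, which is exactly what makes $\tu{B}$ independent of the position it acts on and hence applicable verbatim to the query token. No uniqueness or genericity statement is required: the lemma only claims that one valid sequence of coefficients can be generated recursively, and the induction constructs exactly such a sequence.
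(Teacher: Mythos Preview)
Your proposal is correct and follows essentially the same approach as the paper's own proof: both argue by induction on the layer index, rewrite the multi-head update as a single $(d+1)\times(d+1)$ block matrix acting on $\tu{e}_i$, and then read off the recursions by substituting the inductive hypothesis. Your write-up in fact spells out more of the bookkeeping (the block-row decomposition, the coefficient matching, and the query-token case) than the paper does, which simply states the recursion and notes ``one can check'' it satisfies the update.
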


The updates to the parameters are complicated and nonlinear, allowing linear transformers to implement powerful algorithms, as we will later see in Section~\ref{sec:power}. In fact, even with diagonal $P$ and $Q$, they remain flexible. The updates in this case can be further simplified to a more familiar form:

\begin{lemma} \label{lem:diagonalupdate}
    In the setup of Theorem~\ref{thm:mainlinear} with diagonal parameters,
    $\tu{u}, \tu{w}$ are updated as
    \begin{align*}
        \tp{u} & = (I-\tu{\Lambda})\tu{u} + \tu{\Gamma} \Sigma \left(\tu{a}w^*-\tu{w}\right);\\
        \tp{w} & = (1+\tu{s}) \tu{w} - \tu{\Pi}\Sigma (\tu{a}w^*-\tu{w}) -  \tu{\Phi}\tu{u}.
    \end{align*}
    Here $\tu{\Lambda}, \tu{\Gamma}, \tu{s}, \tu{\Pi}, \tu{\Phi}$ are matrices and numbers that depend on $\tu{M},\tu{u},\tu{a},\tu{w}$ in Lemma~\ref{lem:mainupdate}.
\end{lemma}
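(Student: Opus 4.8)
The plan is to take the abstract recursions of Lemma~\ref{lem:mainupdate} and specialize them using two ingredients: the diagonality of the $\tu{P}_k,\tu{Q}_k$, and the explicit form of the layer-$l$ tokens supplied by Theorem~\ref{thm:mainlinear}. First I would record the layer-$l$ second moments. Inserting $\tu{x}_i=\tu{M}x_i+y_i\tu{u}$ and $\tu{y}_i=\tu{a}y_i-\inner{\tu{w},x_i}$ into the definitions of $\tu{\Sigma},\tu{\alpha},\tu{\lambda}$, expanding, and using $\alpha=\Sigma w^*$, one gets $\tu{\Sigma}=\tu{M}\Sigma(\tu{M})^\top+\tu{M}\alpha(\tu{u})^\top+\tu{u}\alpha^\top(\tu{M})^\top+\lambda\,\tu{u}(\tu{u})^\top$, that $\tu{\lambda}=(\tu{a})^2\lambda-2\tu{a}\inner{\tu{w},\alpha}+(\tu{w})^\top\Sigma\tu{w}$ is a scalar, and --- the key identity ---
\begin{equation*}
\tu{\alpha}=\tu{M}\,\Sigma(\tu{a}w^*-\tu{w})+\big(\tu{a}\lambda-\inner{\tu{w},\alpha}\big)\,\tu{u}.
\end{equation*}
The content here is that $\tu{\alpha}$ decomposes into a ``gradient/residual'' part $\tu{M}\,\Sigma(\tu{a}w^*-\tu{w})$ and a ``momentum'' part proportional to $\tu{u}$; this is exactly why those two objects are the only ones appearing on the right-hand sides of the claimed updates.

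Next I would use diagonality to compute the block matrix $\left(\begin{smallmatrix}\tu{A}&\tu{b}\\(\tu{c})^\top&\tu{d}\end{smallmatrix}\right)=\sum_k\tu{P}_k\left(\begin{smallmatrix}\tu{\Sigma}&\tu{\alpha}\\(\tu{\alpha})^\top&\tu{\lambda}\end{smallmatrix}\right)\tu{Q}_k$ of Lemma~\ref{lem:mainupdate}. A diagonal $(d+1)\times(d+1)$ matrix, viewed as a $2\times2$ block matrix, has the form $\left(\begin{smallmatrix}G&0\\0&p\end{smallmatrix}\right)$ with $G$ a $d\times d$ diagonal block and $p$ a scalar; writing $\tu{P}_k$ and $\tu{Q}_k$ this way, with $d\times d$ diagonal blocks $\tu{G}_k,\tu{H}_k$ and scalars $\tu{p}_k,\tu{q}_k$, and multiplying out block by block gives $\tu{A}=\sum_k\tu{G}_k\tu{\Sigma}\tu{H}_k$, $\tu{b}=\big(\sum_k\tu{q}_k\tu{G}_k\big)\tu{\alpha}$, $\tu{c}=\big(\sum_k\tu{p}_k\tu{H}_k\big)\tu{\alpha}$, and $\tu{d}=\big(\sum_k\tu{p}_k\tu{q}_k\big)\tu{\lambda}$. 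Thus $\tu{b},\tu{c}$ are fixed diagonal matrices applied to the single vector $\tu{\alpha}$, and $\tu{d}$ is a fixed scalar times $\tu{\lambda}$, hence just a scalar.

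Finally I would substitute these into the $\tp{u}$ and $\tp{w}$ lines of Lemma~\ref{lem:mainupdate} and read off coefficients. In $\tp{u}=(I+\tu{A})\tu{u}+\tu{a}\tu{b}$, the term $\tu{A}\tu{u}$ is a coefficient matrix applied to $\tu{u}$, and plugging the decomposition of $\tu{\alpha}$ into $\tu{a}\tu{b}=\tu{a}\big(\sum_k\tu{q}_k\tu{G}_k\big)\tu{\alpha}$ produces one term proportional to $\tu{u}$ and one equal to $\tu{\Gamma}\,\Sigma(\tu{a}w^*-\tu{w})$ with $\tu{\Gamma}=\tu{a}\big(\sum_k\tu{q}_k\tu{G}_k\big)\tu{M}$; gathering all $\tu{u}$-terms (including the identity) defines $I-\tu{\Lambda}$. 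Likewise, in $\tp{w}=(1+\tu{d})\tu{w}-(\tu{M})^\top\tu{c}$ the scalar $\tu{d}$ plays the role of $\tu{s}$, and expanding $(\tu{M})^\top\tu{c}$ via the decomposition of $\tu{\alpha}$ yields the $\Sigma(\tu{a}w^*-\tu{w})$-term with $\tu{\Pi}=(\tu{M})^\top\big(\sum_k\tu{p}_k\tu{H}_k\big)\tu{M}$ and the $\tu{u}$-term with $\tu{\Phi}=\big(\tu{a}\lambda-\inner{\tu{w},\alpha}\big)(\tu{M})^\top\big(\sum_k\tu{p}_k\tu{H}_k\big)$. Each of the five quantities is then an explicit function of $\tu{M},\tu{u},\tu{a},\tu{w}$ together with the data moments $\Sigma,\alpha,\lambda$ and the fixed attention parameters. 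I do not expect a genuine obstacle: the whole argument is ``substitute and collect.'' The one place demanding care is the block bookkeeping --- keeping straight which scalar coefficients ($\tu{a}\lambda-\inner{\tu{w},\alpha}$, $\sum_k\tu{p}_k\tu{q}_k$, and so on) land in which of $\tu{\Lambda},\tu{\Gamma},\tu{s},\tu{\Pi},\tu{\Phi}$, and noticing that although $\tu{\Sigma}$ carries a ``bare'' factor $\Sigma$ inside $\tu{M}\Sigma(\tu{M})^\top$, it enters only through $\tu{A}\tu{u}$, which is legitimately part of the coefficient multiplying $\tu{u}$, so it forces no extra $\Sigma(\tu{a}w^*-\tu{w})$-type term.
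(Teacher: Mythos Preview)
Your proposal is correct and follows the same overall strategy as the paper: specialize the block matrix of Lemma~\ref{lem:mainupdate} to the diagonal case, decompose $\tu{\alpha}$ into a gradient-like piece plus a $\tu{u}$-piece, then substitute and collect. The one noteworthy difference is the decomposition of $\tu{\alpha}$. You expand directly to obtain
\[
\tu{\alpha}=\tu{M}\,\Sigma(\tu{a}w^*-\tu{w})+\bigl(\tu{a}\lambda-\inner{\tu{w},\alpha}\bigr)\,\tu{u},
\]
whereas the paper writes $y_i=\inner{w^*,x_i}+r_i$, uses the orthogonality $\sum_i r_i x_i=0$, and arrives at
\[
\tu{\alpha}=\bigl(\tu{M}+\tu{a}\tu{u}(w^*)^\top\bigr)\,\Sigma(\tu{a}w^*-\tu{w})+\tu{a}\rho\,\tu{u},\qquad \rho=\sum_i r_i^2.
\]
Both are valid splittings of the same vector, and since the lemma only asserts existence of $\tu{\Lambda},\tu{\Gamma},\tu{\Pi},\tu{\Phi}$, either works; they simply yield different explicit formulas (for instance, your $\tu{\Gamma}$ is $\tu{a}\bigl(\sum_k\tu{q}_k\tu{G}_k\bigr)\tu{M}$, while the paper's is $\tu{a}\,\tu{\w}_{xy}\bigl(\tu{M}+\tu{a}\tu{u}(w^*)^\top\bigr)$). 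Your route is slightly more elementary in that it avoids introducing the OLS residuals, while the paper's version makes the dependence on the residual sum of squares $\rho$ explicit, which it then leverages when interpreting the adaptive rescaling and step-size behavior in Section~\ref{sec:power}.
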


Note that $\Sigma \left(\tu{a}w^*-\tu{w}\right)$ is (proportional to) the gradient of a linear model $f(\tu{w}) = \sum_{i=1}^n (\tu{a}y_i - \inner{\tu{w},x_i})^2$. This makes the updates similar to a gradient descent with momentum:
\begin{equation*}
    \tp{u} = (1-\beta)\tu{u} + \nabla f(\tu{w}); \tp{w} = \tu{w}-\eta \tu{u}.
\end{equation*}
Of course, the formula in Lemma~\ref{lem:diagonalupdate} is still much more complicated with matrices in places of $\beta$ and $\eta$, and also including a gradient term for the update of $w$.

\vspace{-2ex}
\section{Power of diagonal attention matrices}
\label{sec:power}

Although linear transformers are constrained, they can solve complex in-context learning problems. Empirically, we have found that they are able to very accurately solve linear regression with mixed noise variance \eqref{e:mixed_noise_loss}, with final learned weights that are very diagonal heavy with some low-rank component (see Fig.~\ref{fig:full_example}). Surprisingly, the final loss remains remarkably consistent even when their $Q$ and $P$ matrices \eqref{e:linear} are diagonal. Here we will analyze this special case and explain its effectiveness.

Since the elements of $x$ are permutation invariant, a diagonal parameterization reduces each attention heads to just four parameters: 
\begin{equation} \label{e:diagonal}
  \tu{P}_k = \msqr{\tu{p}_{x,k}I}{0}{0}{\tu{p}_{y,k}}; \quad \tu{Q}_k = \msqr{\tu{q}_{x,k}I}{0}{0}{\tu{q}_{y,k}}.
\end{equation}
It would be useful to further reparametrize the linear transformer \eqref{e:linear} using:
\begin{align}
\begin{split}
  \tu{\w}_{xx} &= \textstyle{\sum_{k=1}^H \tu{p}_{x,k}\tu{q}_{x,k},} \quad \textstyle{\tu{\w}_{xy} = \sum_{k=1}^H \tu{p}_{x,k}\tu{q}_{y,k},} \\
  \tu{\w}_{yx} &= \textstyle{\sum_{k=1}^H \tu{p}_{y,k}\tu{q}_{x,k},} \quad \textstyle{\tu{\w}_{yy} = \sum_{k=1}^H \tu{p}_{y,k}\tu{q}_{y,k}.} \label{e:diagonal_w}
\end{split}
\end{align}
This leads to the following diagonal layer updates:
\begin{align}
\begin{split}
    \tp{x}_i & = \tu{x}_i + \tu{\w}_{xx}\tu{\Sigma}\tu{x}_i + \tu{w}_{xy} \tu{y}_i \tu{\alpha} \quad\quad\quad\,\, \\
    \tp{x}_t & = \tu{x}_t + \tu{\w}_{xx}\tu{\Sigma}\tu{x}_t + \tu{w}_{xy} \tu{y}_t \tu{\alpha} \\
    \tp{y}_i & = \tu{y}_i + \tu{\w}_{yx} \inner{\tu{\alpha}, \tu{x}_i} + \tu{\w}_{yy} \tu{y}_i\tu{\lambda}, \quad\quad \\
    \tp{y}_t & = \tu{y}_t + \tu{\w}_{yx} \inner{\tu{\alpha}, \tu{x}_t} + \tu{\w}_{yy} \tu{y}_t\tu{\lambda}. \label{e:diagonal_updates}
\end{split}    
\end{align}

Four variables $\tu{\w}_{xx}$, $\tu{\w}_{xy}$, $\tu{\w}_{yx}$, $\tu{\w}_{yy}$ represent information flow between the data and the labels across layers. For instance, the term controlled by $\tu{\w}_{xx}$ measures information flow from $x^l$ to $x^{l+1}$, $\tu{\w}_{yx}$ measures the flow from $x^l$ to $y^{l+1}$ and so forth. Since the model can always be captured by these 4 variables, having many heads does not significantly increase its representation power. When there is only one head the equation $\tu{\w}_{xx}\tu{\w}_{yy} = \tu{\w}_{xy}\tu{\w}_{yx}$ is always true, while models with more than one head do not have this limitation. However empirically even models with one head is quite powerful.

\vspace{-2ex}
\subsection{\gdpp{} and least squares solver}
\vspace{-1ex}

\gdpp{}, introduced in \citet{oswald}, represents a linear transformer that is trained on a fixed noise variance problem \eqref{e:fixed_noise_loss}. It is a variant of a diagonal linear transformer, with all the heads satisfying $\tu{q}_{y,k} = 0$. Dynamics are influenced only by $\tu{\w}_{xx}$ and $\tu{\w}_{yx}$, leading to simpler updates:
\begin{align} 
  \begin{split}
    \tp{x}_i & = \left(I+\tu{\w}_{xx}\tu{\Sigma}\right) \tu{x}_i\\
    \tp{y}_i & = \tu{y}_i + \tu{\w}_{yx} \inner{\tu{\alpha}, \tu{x}_i}. \label{e:gdpp}
  \end{split}
\end{align}
The update on $x$ acts as preconditioning and the update on $y$ performs gradient descent on the data.

While existing analysis by \citet{ahn2023transformers} has not yielded fast convergence rates for \gdpp{}, we show here that it is actually a second-order optimization algorithm for the least squares problem \eqref{e:lsq}:

\begin{theorem} \label{thm:2ndorder}
    Given $(x_1,y_1),...,(x_n,y_n), (x_t,0)$ where $\Sigma$ has eigenvalues in the range $[\nu,\mu]$ with a condition number $\kappa = \nu/\mu$. Let $w^*$ be the optimal solution to least squares problem \eqref{e:lsq}, then there exists hyperparameters for \gdpp{} algorithm that outputs $\hat{y}$ with accuracy $|\hat{y} - \inner{x_t, w^*}| \le \epsilon \|x_t\|\|w^*\|$ in $l = O(\log \kappa + \log \log 1/\epsilon)$ steps. In particular that implies there exists an $l$-layer linear transformer that can solve this task.
\end{theorem}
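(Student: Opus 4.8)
The plan is to show that on least-squares data \gdpp{} collapses into two \emph{decoupled} scalar recursions -- one (governed by $\tu{\w}_{xx}$) that reshapes the spectrum of the transformed covariance, and one (governed by $\tu{\w}_{yx}$) that performs a preconditioned gradient step on the maintained weight vector -- and then to choose the per-layer hyperparameters so that the reshaping turns $\Sigma$ into a near-multiple of the identity in $O(\log\kappa + \log\log 1/\epsilon)$ layers, after which a single gradient step is $\epsilon$-exact. The last sentence of the theorem is then immediate, since \gdpp{} is a special case of a diagonal linear transformer.

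\emph{Step 1 (reduction to scalar recursions).} Working directly from \eqref{e:gdpp}, applied to the query token as well (equivalently, specializing Lemma~\ref{lem:mainupdate}), an induction on $l$ shows $\tu{x}_i = B_l x_i$, $\tu{x}_t = B_l x_t$, $\tu{y}_i = y_i - \inner{\tu{w},x_i}$, $\tu{y}_t = -\inner{\tu{w},x_t}$, where $B_0 = I$, $B_{l+1} = (I + \tu{\w}_{xx}\tu{\Sigma})B_l$, and $w^0 = 0$. Since $\tu{\Sigma} = B_l\Sigma B_l$ is a polynomial in $\Sigma$, all matrices in play commute, so $\tu{\Sigma} = B_l^2\Sigma$. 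Substituting $\tu{\alpha} = \sum_i \tu{y}_i\tu{x}_i = B_l(\alpha - \Sigma\tu{w}) = B_l\Sigma(w^* - \tu{w})$ into the $y$-update gives
\[
  \tp{w} = \tu{w} - \tu{\w}_{yx}\,\tu{\Sigma}(w^* - \tu{w}),
  \qquad\text{so}\qquad
  \tp{e} = (I + \tu{\w}_{yx}\tu{\Sigma})\,\tu{e},\quad e^0 = w^*,
\]
for the error $\tu{e} := w^* - \tu{w}$. The readout is $\hat y = -y^L_{n+1} = \inner{w^L,x_t}$, so $|\hat y - \inner{x_t,w^*}| = |\inner{e^L,x_t}| \le \|e^L\|\,\|x_t\|$ and it suffices to make $\|e^L\| \le \epsilon\|w^*\|$. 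Finally $\tu{\Sigma}$ evolves eigenvalue-by-eigenvalue as $\sigma^{l+1} = \sigma^l(1 + \tu{\w}_{xx}\sigma^l)^2$, independently of the $\tu{\w}_{yx}$'s.

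\emph{Step 2 (reshape, then one gradient step).} Use the first $L-1$ layers with $\tu{\w}_{yx} = 0$ (so $\tu{e} = w^*$ and $\tu{w} = 0$ throughout) and $\tu{\w}_{xx} = -\tfrac{1}{3\mu_l}$, where $[\nu_l,\mu_l]$ is the current spectrum of $\tu{\Sigma}$ (so $[\nu_0,\mu_0]$ is the given range). On $[\nu_l,\mu_l]$ the map $\sigma \mapsto \sigma(1 - \sigma/(3\mu_l))^2$ is increasing, so writing $\kappa_l := \mu_l/\nu_l$ (with $\kappa_0 = \kappa$) and $\phi(u) := u(1-u/3)^2$ one gets $\kappa_{l+1} = \phi(1)/\phi(1/\kappa_l)$ exactly. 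Two properties of $\phi$ on $(0,1]$ control the rate: (i) for $u \le \tfrac12$ one has $\phi(u) \ge \tfrac{25}{36}u$, hence $\kappa_l \ge 2 \Rightarrow \kappa_{l+1} \le \tfrac{16}{25}\kappa_l$, so after $O(\log\kappa)$ layers $\kappa_l < 2$; (ii) $\phi(1) = \tfrac49$, $\phi'(1) = 0$, $\phi''(1) = -\tfrac23$, so near $u = 1$, $\kappa_{l+1} - 1 = \tfrac{\phi(1)-\phi(1/\kappa_l)}{\phi(1/\kappa_l)} \le C(\kappa_l - 1)^2$ for an absolute constant $C$, which drives $\kappa_l - 1$ below $2\epsilon$ in a further $O(\log\log 1/\epsilon)$ layers. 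At that point $\tu{\Sigma}$ (for $l = L-1$) has spectrum in $[\nu_{L-1},\mu_{L-1}]$ with $\mu_{L-1}/\nu_{L-1} \le 1 + 2\epsilon$; taking the last layer with $\w_{xx}^{L-1}$ arbitrary and $\w_{yx}^{L-1} = -1/\sqrt{\nu_{L-1}\mu_{L-1}}$, every eigenvalue of $I + \w_{yx}^{L-1}\Sigma^{L-1}$ equals $1 - \sigma'/\sqrt{\nu_{L-1}\mu_{L-1}} \in [1-\sqrt{1+2\epsilon},\,1-1/\sqrt{1+2\epsilon}]$, which has magnitude $\le \epsilon$. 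Hence $\|e^L\| = \|(I + \w_{yx}^{L-1}\Sigma^{L-1})w^*\| \le \epsilon\|w^*\|$, and $L = O(\log\kappa + \log\log 1/\epsilon)$. The main obstacle is precisely this spectral bookkeeping for the cubic map: verifying the geometric contraction of $\kappa_l$ while it is large, the quadratic contraction of $\kappa_l - 1$ once it is near $1$ (which hinges on $\phi'(1) = 0$ -- this is exactly what makes \gdpp{} ``second order''), and the final step's conversion of a $(1+O(\epsilon))$-conditioned covariance into an $O(\epsilon)$-accurate weight vector.

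\emph{Step 3 (back to linear transformers).} Each chosen pair $(\tu{\w}_{xx},\tu{\w}_{yx})$ is realized by a single diagonal head with $\tu{q}_{x} = 1$, $\tu{q}_{y} = 0$, $\tu{p}_{x} = \tu{\w}_{xx}$, $\tu{p}_{y} = \tu{\w}_{yx}$: by \eqref{e:diagonal_w} this head produces exactly these values of $\w_{xx},\w_{yx}$ and $\w_{xy} = \w_{yy} = 0$, i.e.\ a \gdpp{} layer. Stacking the $L$ layers yields an $L$-layer linear transformer whose output satisfies $|\hat y - \inner{x_t,w^*}| \le \epsilon\|x_t\|\|w^*\|$, as claimed.
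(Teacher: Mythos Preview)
Your proposal is correct and follows essentially the same approach as the paper: precondition for $L-1$ layers using $\tu{\w}_{xx}=-1/(3\mu_l)$ to drive the condition number of $\tu{\Sigma}$ to $1+O(\epsilon)$ via the cubic map $\sigma\mapsto\sigma(1-\gamma\sigma)^2$ (geometric contraction when $\kappa_l$ is large, quadratic once $\kappa_l-1$ is small because $\phi'(1)=0$), then finish with a single gradient step. The paper packages the two phases as Lemmas~\ref{lem:improvekappa} and~\ref{lem:finalgd} and handles the final error bound by decomposing the accumulated preconditioner as $M=A\Sigma^{-1/2}$, whereas you track the error vector $e^l=w^*-\tu{w}$ directly through the recursion $e^{l+1}=(I+\tu{\w}_{yx}\tu{\Sigma})e^l$ and use the geometric-mean step $\w_{yx}^{L-1}=-1/\sqrt{\nu_{L-1}\mu_{L-1}}$; this is a slightly cleaner bookkeeping of the same argument.
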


The convergence rate of $O(\log \log 1/\epsilon)$ is typically achieved only by second-order algorithms such as Newton's method. 

\vspace{-2ex}
\subsection{Understanding $\w_{yy}$: adaptive rescaling}
\vspace{-1ex}

If a layer only has $\tu{\w}_{yy} \ne 0$, it has a rescaling effect. The amount of scaling is related to the amount of noise added in a model selection setting. The update rule for this layer is:
\[
    \tp{y}_i = \left(1+\tu{\w}_{yy}\tu{\lambda}\right) \tu{y}_i.
\]
This rescales every $y$ by a factor that depends on $\tu{\lambda}$. When $\tu{\w}_{yy} < 0$, this shrinks of the output based on the norm of $y$ in the previous layer. This is useful for the mixed noise variance problem, as ridge regression solution scales the least squares solution by a factor that depends on the noise level.

Specifically, assuming $\Sigma \approx \E[\Sigma] = nI$, the ridge regression solution becomes $w^*_{\sigma^2}\approx \frac{n}{n+\sigma^2}w^*$, which is exactly a scaled version of the OLS solution. Further, when noise is larger, the scaled factor is smaller, which agrees with the behavior of a negative $\w_{yy}$.

We can show that using adaptive scaling $\w_{yy}$ even a 2-layer linear transformer can be enough to solve a simple example of categorical mixed noise variance problem $\sigma_\tau \in \{\sigma_1, \sigma_2\}$ and $n\rightarrow \infty$:

\begin{theorem}\label{thm:adaptivescaling}
    Suppose the input to the transformer is $(x_1,y_1),(x_2,y_2), ..., (x_n,y_n), (x_q,0)$, where $x_i \sim N(0,\frac{1}{n}I)$, $y_i = w^\top x_i + \xi_i$. Here $\xi_i \sim N(0,\sigma^2)$ is the noise whose noise level $\sigma$ can take one of two values: $\sigma_1$ or $\sigma_2$. Then as $n$ goes to $+\infty$, there exists a set of parameters for two-layer linear transformers such that the implicit $w^2$ of the linear transformer converges to the optimal ridge regression results (and the output of the linear transformer is $-\inner{w^2,x_q}$). Further, the first layer only has $\w_{yx}$ being nonzero and the second layer only has $\w_{yy}$ being nonzero.
\end{theorem}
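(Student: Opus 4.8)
The plan is to feed the data through the two-layer network using the diagonal update rule \eqref{e:diagonal_updates}, track the few scalar and vector quantities that survive the $n\to\infty$ limit, and choose the two free scalars — the $\w_{yx}$ of the first layer and the $\w_{yy}$ of the second — so that the resulting implicit weight coincides in the limit with the ridge solution $w^*_{\sigma^2}=(\Sigma+\sigma^2I)^{-1}\alpha$. First I would record the asymptotics of the sufficient statistics under $x_i\sim N(0,\tfrac1nI)$: by a law-of-large-numbers argument $\Sigma\to I$; $\alpha=\Sigma w+\sum_i\xi_ix_i$ is tight but does \emph{not} converge to a constant (its noise part has covariance $\sigma^2I$); and $\lambda=\sum_i(w^\top x_i)^2+2\sum_i\xi_i\,w^\top x_i+\sum_i\xi_i^2$, whose last term concentrates around $n\sigma^2$, so that $\lambda/n\to\sigma^2$ in probability. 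The structural point I would rely on is that $\lambda$, unlike $\Sigma$ and $\alpha$, grows linearly in $n$ with leading coefficient exactly the noise level the transformer must identify.

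Next I would compute the effect of the two layers. With only $\w_{yx}=c$ active in the first layer, \eqref{e:diagonal_updates} gives $x^1_i=x_i$, $y^1_i=y_i+c\inner{\alpha,x_i}$, $y^1_q=c\inner{\alpha,x_q}$, and hence $\alpha^1=(I+c\Sigma)\alpha$ and $\lambda^1=\lambda+2c\|\alpha\|^2+c^2\alpha^\top\Sigma\alpha$, so $\lambda^1/n\to\sigma^2$ as well. With only $\w_{yy}=e$ active in the second layer, $y^2_i=(1+e\lambda^1)y^1_i$ and $y^2_q=(1+e\lambda^1)y^1_q$, so the prediction $-y^2_q$ equals $-c(1+e\lambda^1)\inner{\alpha,x_q}$; by Theorem~\ref{thm:mainlinear} this identifies the implicit weight $w^2=-c(1+e\lambda^1)\alpha$. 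Writing $c=-\gamma$ and $e=\hat e/n$, the scalar prefactor converges to $\gamma(1+\hat e\sigma^2)$, while $w^*_{\sigma^2}\to\tfrac{1}{1+\sigma^2}\alpha$ since $\Sigma\to I$. It then suffices to solve $\gamma(1+\hat e\sigma^2)=\tfrac{1}{1+\sigma^2}$ simultaneously for both $\sigma^2\in\{\sigma_1^2,\sigma_2^2\}$; eliminating $\gamma$ yields the linear equation $(1+\sigma_1^2)(1+\hat e\sigma_1^2)=(1+\sigma_2^2)(1+\hat e\sigma_2^2)$ with solution $\hat e=-\tfrac{1}{1+\sigma_1^2+\sigma_2^2}$ (well-defined since $\sigma_1\neq\sigma_2$), and then $\gamma$ is the reciprocal of the common value, which one checks is positive because $1+\hat e\sigma_i^2>0$. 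Since $\alpha$ is tight and both $\gamma(1+e\lambda^1)$ and $(\Sigma+\sigma^2I)^{-1}$ converge to $\tfrac{1}{1+\sigma^2}$, the difference $w^2-w^*_{\sigma^2}$ tends to $0$ in probability, which is the claimed convergence; the output is $-\inner{w^2,x_q}$ by construction, and both $\w_{yx}$-only and $\w_{yy}$-only layers are realizable with a single head via \eqref{e:diagonal_w}.

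I expect the main obstacle to be bookkeeping rather than anything deep, but two points need care. First, because $\alpha$ never converges to a constant, the statement ``$w^2$ converges to the ridge solution'' must be read as the \emph{difference} $w^2-w^*_{\sigma^2}$ going to zero, so I would keep the common random factor $\alpha$ explicit throughout and only take limits of the scalar and matrix prefactors. Second, the correct scaling of the second-layer parameter is $\w_{yy}=\Theta(1/n)$ — forced by $\lambda=\Theta(n)$ — so the parameter set depends on $n$ and this is precisely where the $n\to\infty$ limit enters; the first-layer constant $c$ can be taken independent of $n$. The remaining steps (solvability of the $2\times2$ system, positivity of $\gamma$ and $1+\hat e\sigma_i^2$, and the concentration estimates for $\Sigma,\alpha,\lambda$) are elementary.
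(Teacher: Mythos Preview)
Your proposal is correct and follows essentially the same route as the paper: compute the implicit weight after a $\w_{yx}$-only layer followed by a $\w_{yy}$-only layer, use $\lambda^1/n\to\sigma^2$ to obtain the limiting scalar prefactor, and then solve the two equations $\gamma(1+\hat e\sigma_i^2)=\tfrac{1}{1+\sigma_i^2}$ (which the paper writes as a linear system in $\w_{yx}$ and $n\w_{yx}\w_{yy}$). Your treatment is in fact more careful than the paper's on two points---you keep track of the fact that $\alpha$ is only tight rather than convergent (so one should state $w^2-w^*_{\sigma^2}\to0$), and you make the $\w_{yy}=\Theta(1/n)$ scaling explicit---but these are refinements of the same argument, not a different one.
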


\vspace{-2ex}
\subsection{Understanding $\w_{xy}$: adapting step-sizes}
\vspace{-1ex}

The final term in the diagonal model, $\w_{xy}$, has a more complicated effect. Since it changes only the $x$-coordinates, it does not have an immediate effect on $y$. To understand how it influences the $y$ we consider a simplified two-step process, where the first step only has $\w_{xy} \ne 0$ and the second step only has $\w_{yx} \ne 0$ (so the second step is just doing one step of gradient descent). In this case, the first layer will update the $x_i$'s as:

\vspace{-4ex}
\begin{align*}
    x_i^1 & = x_i + y_i \w_{xy}{\sum_{j=1}^n} y_jx_j \\
    & = x_i + \w_{xy}y_i {\sum_{j=1}^n} (\langle w^*,x_j\rangle + r_j)x_j \\
    & = x_i + \w_{xy}y_i \Sigma w^* \\ 
    &= x_i + \w_{xy}(\langle w^*,x_i\rangle + r_i) \Sigma w^* \\
    & = (I + \w_{xy}\Sigma w^*(w^*)^\top) x_i + \w_{xy} r_i \Sigma w^*.
\end{align*}

There are two effects of the $\w_{xy}$ term, one is a multiplicative effect on $x_i$, and the other is an additive term that makes $x$-output related to the residual $r_i$. The multiplicative step in $x_i$ has an unknown preconditioning effect. For simplicity we assume the multiplicative term is small, that is:
\[
x_i^1 \approx x_i + \w_{xy}r_i\Sigma w^*;\quad x_t^1 \approx x_t.
\]
The first layer does not change $y$, so $y_t^1 = y_t$ and $y^1_i = y_i$. For this set of $x_i$, we can write down the output on $y$ in the second layer as

\begin{figure*}
  \centering
    \includegraphics[width=1\textwidth]{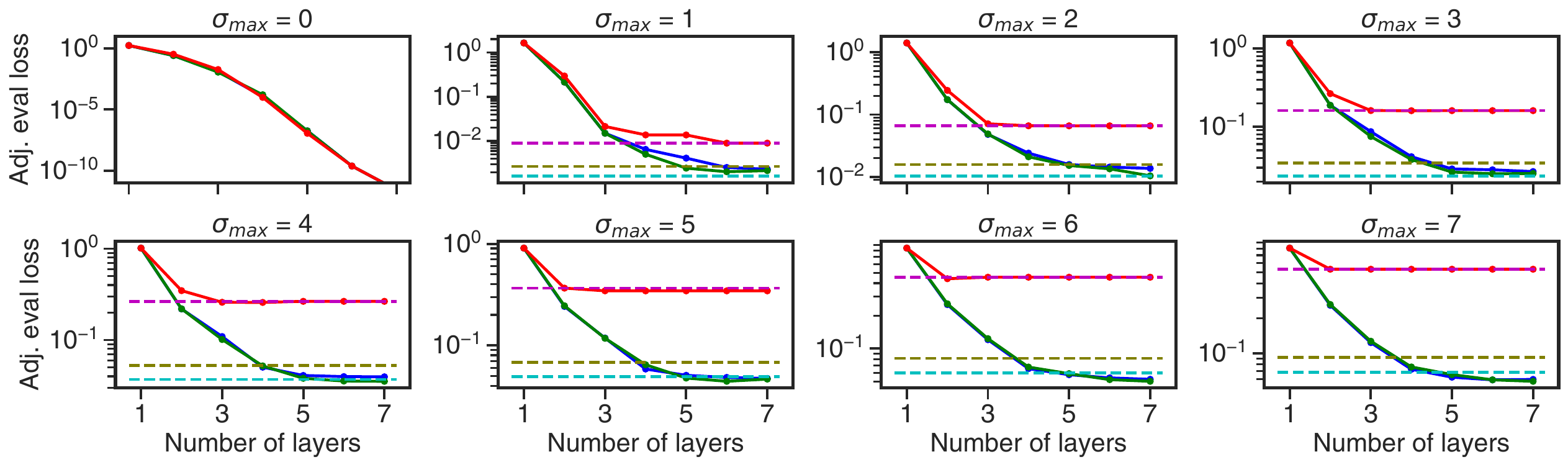}
    \includegraphics[width=0.8\textwidth]{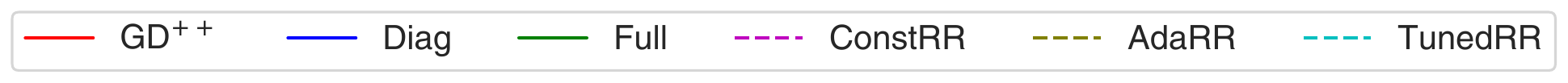}
    \vspace{-2ex}
    \caption{In-context learning performance for noisy linear regression problem across models with different number of layers and $\sigma_{max}$ for $\sigma_\tau\sim U(0,\sigma_{max})$. Each marker corresponds to a separately trained model with a given number of layers. Models with diagonal attention weights (\diag{}) match those with full attention weights (\full{}). Models specialized on a fixed noise (\gdpp{}) perform poorly, similar to a Ridge Regression solution with a constant noise (\constrr{}). Among the baselines, only tuned exact Ridge Regression solution (\tunedrr{}) is comparable with linear transformers.}
    \label{fig:estimate_uniform_across}
\vspace{-2ex}
\end{figure*}

\vspace{-4ex}
\begin{align*}
    y_t^2 & = y_t + \w_{yx}\sum_{i=1}^n y_i (x_i^1)^\top x_t \\
    & \approx y_t + \w_{yx}[\sum_{i=1}^n y_ix_i + \w_{xy}\sum_{i=1}^n y_i r_i \Sigma w^*]x_t \\
    & = y_t + \w_{yx} (1 +\w_{xy} \sum_{i=1}^n r_i^2) (\Sigma w^*)^\top x_t.
\end{align*}
\vspace{-3ex}

Here we used the properties of residual $r_i$ (in particular $\sum_i y_ix_i = \Sigma w^*$, and $\sum_i y_i r_i = \sum_i r_i^2$). %
 Note that $(\Sigma w^*)^\top x_t$ is basically what a gradient descent step on the original input should do. Therefore effectively, the two-layer network is doing gradient descent, but the step size is the product of $-\w_{yx}$ and $(1 +\w_{xy} \sum_i r_i^2)$. The factor $(1 +\w_{xy} \sum_i r_i^2)$ depends on the level of noise, and when $\w_{xy}, \w_{yx} < 0$, the effective step size is smaller when there is more noise. This is especially helpful in the model selection problem, because intuitively one would like to perform early-stopping (small step sizes) when the noise is high.

\vspace{-1ex}
\section{Experiments}

In this section, we investigate the training dynamics of linear transformers when trained with a mixed noise variance problem \eqref{e:mixed_noise_loss}. We evaluate three types of single-head linear transformer models: 
\vspace{-1ex}
\begin{itemize}
    \item \full{}. Trains full parameter matrices.
\vspace{-0.5ex}
    \item \diag{}. Trains diagonal parameter matrices \eqref{e:diagonal_updates}.
\vspace{-0.5ex}
    \item \gdpp{}. An even more restricted diagonal variant defined in \eqref{e:gdpp}.
\end{itemize}
\vspace{-1ex}
For each experiment, we train each linear transformer modifications with a varying number of layers ($1$ to $7$) using using Adam optimizer for $200\,000$ iterations with a learning rate of $0.0001$ and a batch size of $2\,048$. In some cases, especially for a large number of layers, we had to adjust the learning rate to prevent stability issues. We report the best result out of $5$ runs with different training seeds. We used $N=20$ in-context examples in $D=10$ dimensions. We evaluated the algorithm using $100\,000$ novel sequences. All the experiments were done on a single H100 GPU with 80GB of VRAM. It took on average 4--12 hours to train a single algorithm, however experimenting with different weight decay parameters, better optimizer and learning rate schedule will likely reduce this number dramatically.

We use \emph{adjusted evaluation loss} as our main performance metric. It is calculated by subtracting the oracle loss from the predictor's loss. The oracle loss is the closed-form solution of the ridge regression loss \eqref{e:rr}, assuming the noise variance $\sigma_\tau$ is known. The adjusted evaluation loss allows for direct model performance comparison across different noise variances. This is important because higher noise significantly degrades the model prediction. Our adjustment does not affect the model's optimization process, since it only modifies the loss by an additive constant.

\begin{figure*}
  \centering
    \includegraphics[width=1\textwidth]{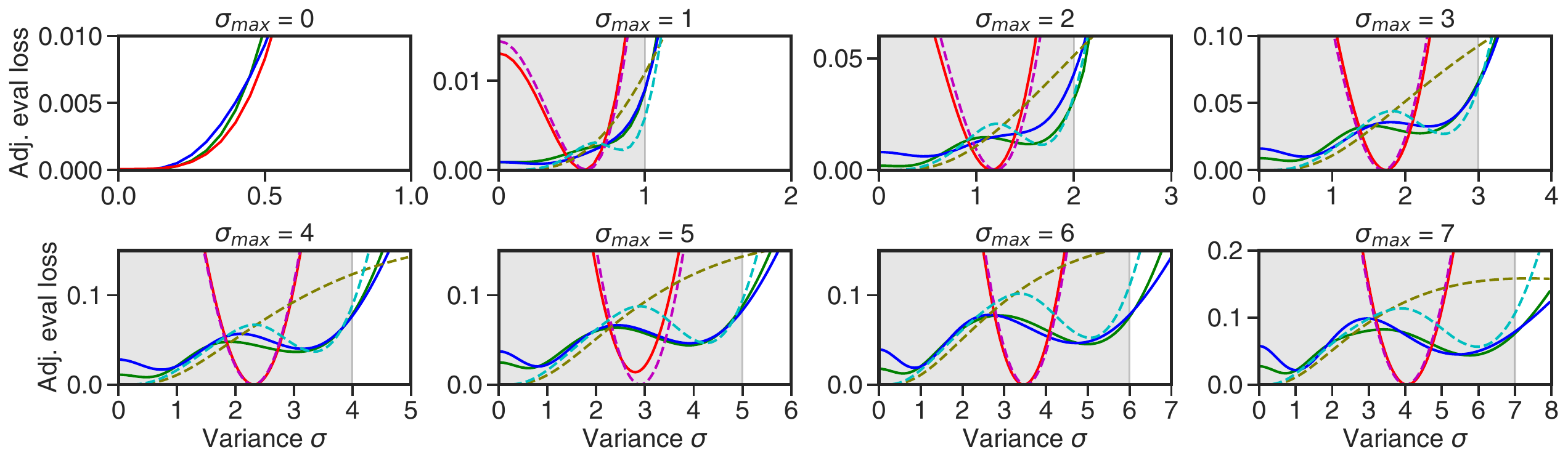}
    \includegraphics[width=1\textwidth]{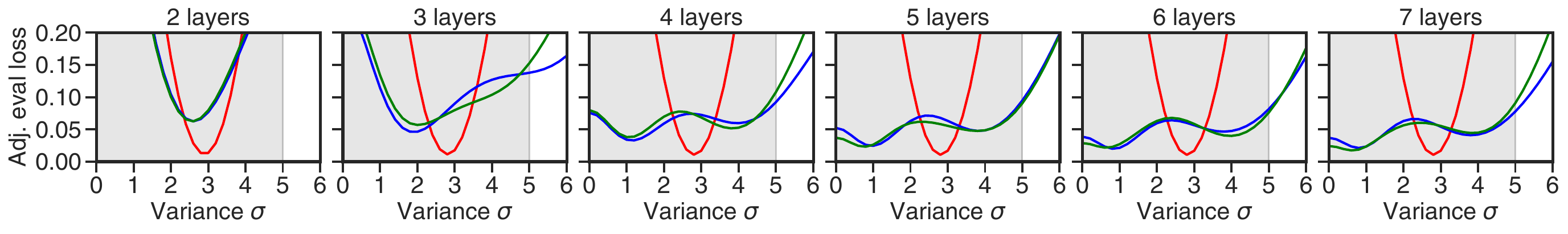}
    \includegraphics[width=0.8\textwidth]{figures/methods_legend.pdf}
    \vspace{-2ex}
    \caption{Per-variance profile of models behavior for uniform noise variance $\sigma_\tau\sim U(0,\sigma_{max})$. \emph{Top two rows:} 7-layer models with varying $\sigma_{max}$. \emph{Bottom row:} models with varying numbers of layers, fixed $\sigma_{max}=5$. In-distribution noise is shaded gray.}
    \label{fig:eval_across_noises}
\vspace{-3ex}
\end{figure*}

\vspace{-1ex}
\paragraph{Baseline estimates.} We evaluated the linear transformer against a closed-form solution to the ridge regression problem \eqref{e:rr}. We estimated the noise variance $\sigma_\tau$ using the following methods:
\vspace{-1ex}
\begin{itemize}
    \item \emph{Constant Ridge Regression (\constrr{}).} The noise variance is estimated using a single scalar value for all the sequences, tuned separately for each mixed variance problem.
\vspace{-0.5ex}    
    \item \emph{Adaptive Ridge Regression (\adarr{}).} Estimate the noise variance via unbiased estimator \citep{cherkassky2003comparison} $\sigma^2_{\text{est}} = \frac{1}{n-d}\sum_{j=1}^n(y_j-\hat y_j)^2$, where $\hat y_j$ represents the solution to the ordinary least squares \eqref{e:lsq}, found in a closed-form.
\vspace{-1ex}    
    \item \emph{Tuned Adaptive Ridge Regression (\tunedrr{}).} Same as above, but after the noise is estimated, we tuned two additional parameters to minimize the evaluation loss: (1) a max.\ threshold value for the estimated variance, (2) a multiplicative adjustment to the noise estimator. These values are tuned separately for each problem.
\end{itemize}
\vspace{-1ex}
Notice that all the baselines above are based on ridge regression, which is a closed-form, non-iterative solution. Thus, they have an algorithmic advantage over linear transformers that do not have access to matrix inversion. These baselines help us gauge the best possible performance, establishing an upper bound rather than a strictly equivalent comparison.

A more faithful comparison to our method would be an iterative version of the \adarr{} that does not use matrix inversion. Instead, we can use gradient descent to estimate the noise and the solution to the ridge regression. However, in practice, this gradient descent estimator converges to \adarr{} only after $\approx100$ iterations. In contrast, linear transformers typically converge in fewer than $10$ layers.

\begin{figure*}
  \centering
  \begin{tabular}{@{\hspace{-0.02\textwidth}}c@{\hspace{0.01\textwidth}}c@{\hspace{-0.02\textwidth}}}
  $\sigma_\tau \in \{1, 3\}$ & $\sigma_\tau \in \{1, 3, 5\}$ \\
  \includegraphics[width=.24\textwidth]{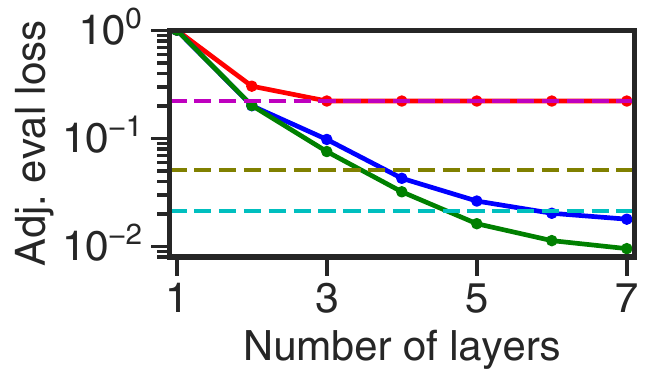}
  \includegraphics[width=.24\textwidth]{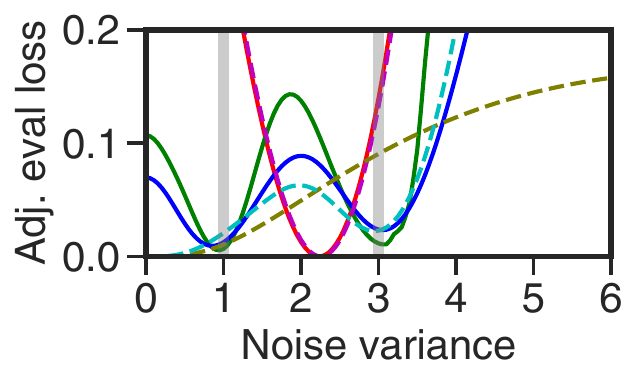} & 
  \includegraphics[width=.24\textwidth]{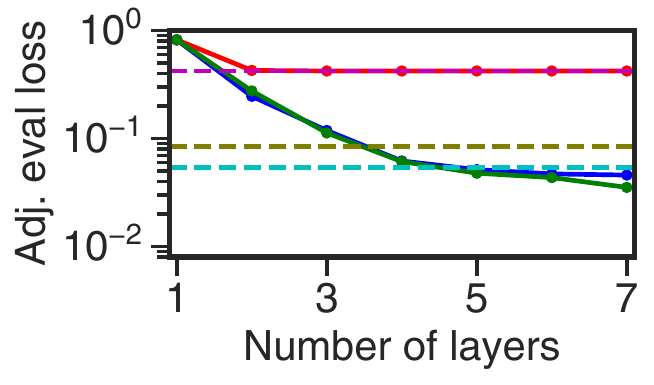}
  \includegraphics[width=.24\textwidth]{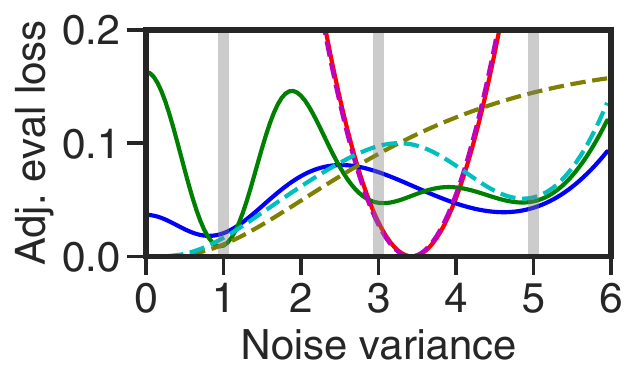} \\\hline
  \end{tabular}
  \begin{tabular}{@{\hspace{-0.00\textwidth}}c@{\hspace{0.0\textwidth}}c@{\hspace{-0.02\textwidth}}}
    \rotatebox{90}{\footnotesize \hspace{0ex}$\sigma_\tau \in \{1, 3\}$} & \includegraphics[width=.96\textwidth]{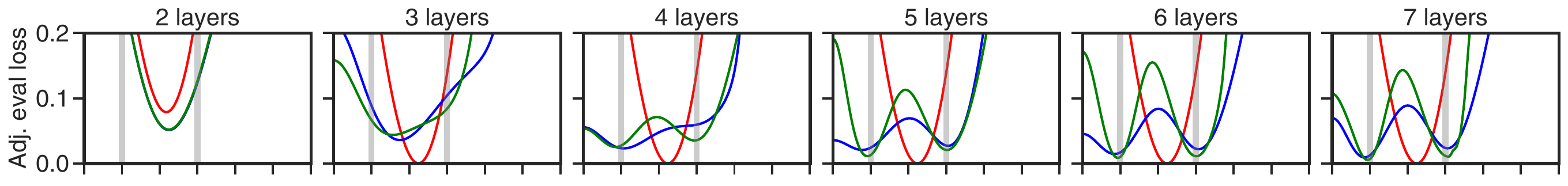} \\[-1ex]
    \rotatebox{90}{\footnotesize \hspace{0ex}$\sigma_\tau \in \{1, 3, 5\}$} & \hspace{0.2ex}\includegraphics[width=.966\textwidth]{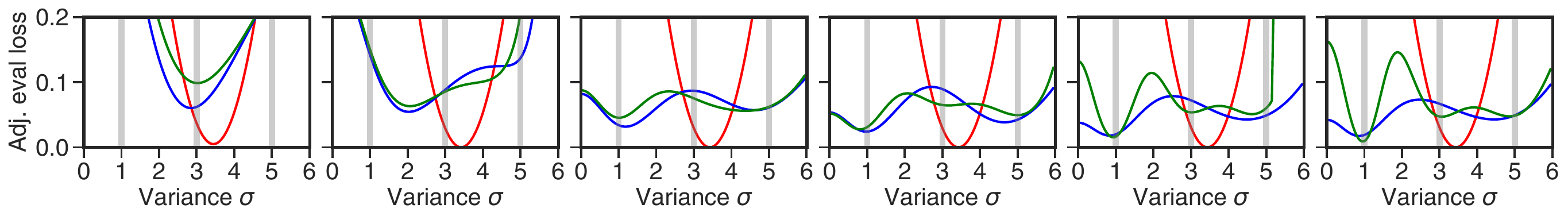} \\[-1ex]
    \multicolumn{2}{c}{\includegraphics[width=0.8\textwidth]{figures/methods_legend.pdf}}
  \end{tabular}  
  \vspace{-2ex}
    \caption{In-context learning performance for noisy linear regression across models with varying number of layers for conditional noise variance $\sigma_\tau \in \{1, 3\}$ and $\sigma_\tau \in \{1, 3, 5\}$. \emph{Top:} loss for models with various number of layers and per-variance profile for models with 7 layers. \emph{Bottom:} Per-variance profile of the model across different numbers of layers. In-distribution noise is shaded gray.}
    \label{fig:eval_noise_individual_per_layer}
\vspace{-3ex}    
\end{figure*}

We consider two choices for the distribution of $\sigma_\tau$:
\vspace{-1.5ex}
\begin{itemize}
    \item \emph{Uniform.} $\sigma_\tau\sim U(0, \sigma_{max})$ drawn from a uniform distribution bounded by $\sigma_{max}$. We tried multiple scenarios with $\sigma_{max}$ ranging from 0 to 7.
\vspace{-0.5ex}    
    \item \emph{Categorical.} $\sigma_\tau\in S$ chosen from a discrete set $S$. We tested $S=\{1,3\}$ and $S=\{1,3,5\}$.
\end{itemize}
\vspace{-1.5ex}
Our approach generalizes the problem studied by \citet{bai2023transformers}, who considered only categorical variance selection and show experiments only with two $\sigma_\tau$ values.

\vspace{-2ex}
\paragraph{Uniform noise variance.} For the uniform noise variance, Fig.~\ref{fig:estimate_uniform_across} shows that \full{} and \diag{} achieve comparable performance across different numbers of layers and different $\sigma_{max}$. On the other hand, \gdpp{} converges to a higher value, closely approaching the performance of the \constrr{} baseline.

As $\sigma_{max}$ grows, linear transformers show a clear advantage over the baselines. With 4 layers, they outperform the closed-form solution \adarr{} for $\sigma_{max}=4$ and larger. Models with $5$ or more layers match or exceed the performance of \tunedrr{}.

The top of Fig.~\ref{fig:eval_across_noises} offers a detailed perspective on performance of 7-layer models and the baselines. Here, we computed per-variance profiles across noise variance range from 0 to $\sigma_{max}+1$. We can see that poor performance of \gdpp{} comes from its inability to estimate well across the full noise variance range. Its performance closely mirrors to \constrr{}, suggesting that \gdpp{} under the hood might also be estimating a single constant variance for all the data.

\adarr{} perfectly estimates problems with no noise, but struggles more as noise variance increases. \tunedrr{} slightly improves estimation by incorporating $\sigma_{max}$ into its tunable parameters, yet its prediction suffers in the mid-range. \full{} and \diag{} demonstrate comparable performance across all noise variances. While more research is needed to definitively confirm or deny their equivalence, we believe that these models are actually not identical despite their similar performance.

At the bottom of Fig.~\ref{fig:eval_across_noises} we set the noise variance to $\sigma_{max}=5$ and display a per-variance profile for models with varying layers. Two-layer models for \full{} and \diag{} behave akin to \gdpp{}, modeling only a single noise variance in the middle. However, the results quickly improve across the entire noise spectrum for 3 or more layers. In contrast, \gdpp{} quickly converges to a suboptimal solution.

\begin{figure}
    \centering
    \includegraphics[width=0.9\textwidth]{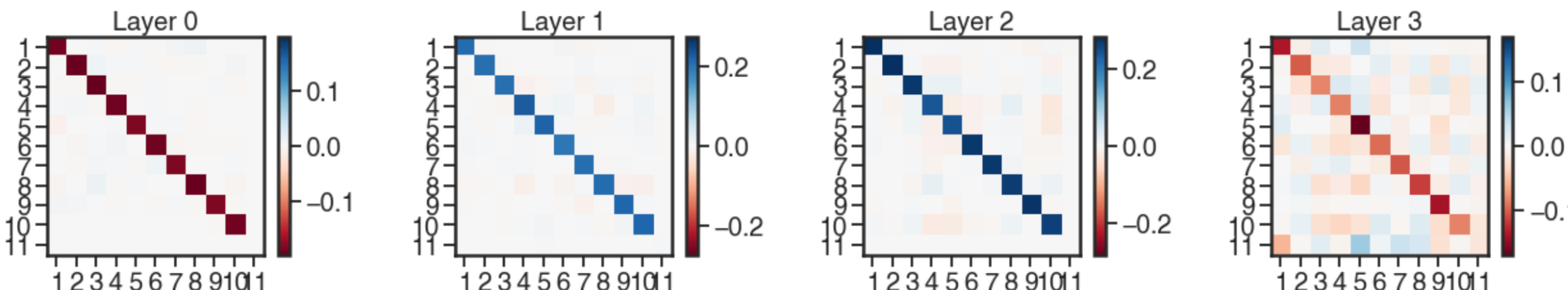}
    \includegraphics[width=0.9\textwidth]{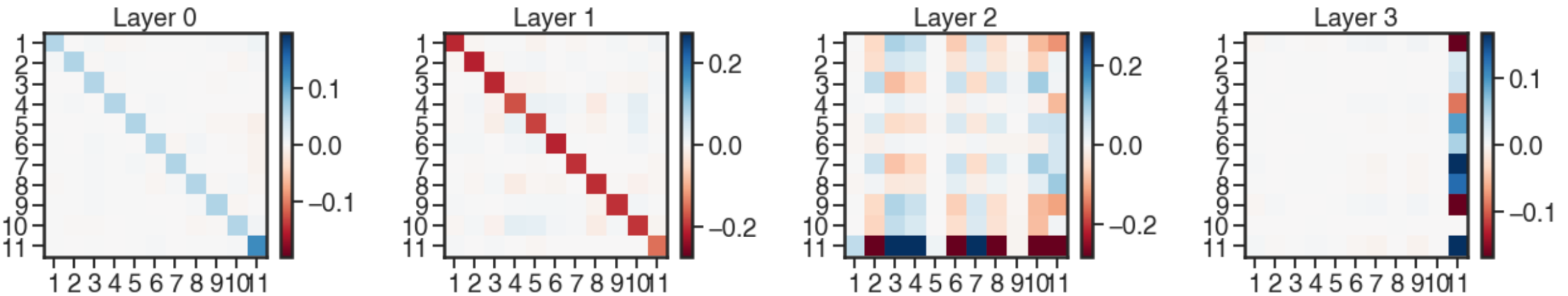}
    \caption{Weights for 4 layer linear transformer with \full{} parametrization trained with categorical noise $\sigma_\tau \in \{1, 3\}$. \emph{Top:} weights for $Q^l$ matrix, \emph{bottom:} weights for $P^l$ matrix.}
    \label{fig:full_example}
\end{figure}

\vspace{-2ex}
\paragraph{Categorical noise variance.} Fig.~\ref{fig:eval_noise_individual_per_layer} shows a notable difference between \diag{} and \full{} models for categorical noise variance $\sigma_\tau\in\{1,3\}$. This could stem from a bad local minima, or suggest a fundamental difference between the models for this problem. Interestingly, from per-variance profiling we see that \diag{} extrapolates better for variances not used for training, while \full{}, despite its lower in-distribution error, performs worse on unseen variances. Fig.~\ref{fig:full_example} shows learned weights of the 4 layer linear transformer with \full{} parametrization. The weights are very diagonal heavy, potentially with some low-rank component.

For $\sigma_\tau\in\{1,3,5\}$, examining the per-variance profile at the bottom of Fig.~\ref{fig:eval_noise_individual_per_layer} reveals differences in their behaviors. \full{} exhibits a more complex per-variance profile with more fluctuations than the diagonal model, suggesting greater representational capacity. Surprisingly, it did not translate to better loss results compared to \diag{}.

For easy comparison, we compile the results of all methods and baselines in Table~1 in the Appendix.

\vspace{-1ex}
\section{Conclusions}
\vspace{-1ex}
Our research reveals the surprising ability of linear transformers to tackle challenging in-context learning problems.  We show that each layer maintains an implicit linear regression model, akin to a complex variant of preconditioned gradient descent with momentum-like behavior.

Remarkably, when trained on noisy linear regression problems with unknown noise variance, linear transformers not only outperform standard baselines but also uncover a sophisticated optimization algorithm that incorporates noise-aware step-size adjustments and rescaling. This discovery highlights the potential of linear transformers to automatically discover novel optimization algorithms when presented with the right problems, opening exciting avenues for future research, including automated algorithm discovery using transformers and generalization to other problem domains.

While our findings demonstrate the impressive capabilities of linear transformers in learning optimization algorithms, we acknowledge limitations in our work. These include the focus on simplified linear models, analysis of primarily diagonal attention matrices, and the need for further exploration into the optimality of discovered algorithms, generalization to complex function classes, scalability with larger datasets, and applicability to more complex transformer architectures. We believe these limitations present valuable directions for future research and emphasize the need for a deeper understanding of the implicit learning mechanisms within transformer architectures. %

\section{Acknowledgements}
The authors would like to thank Nolan Miller and Andrey Zhmoginov for their valuable suggestions and feedback throughout the development of this project. Part of this work was done while Rong Ge was visiting Google Research. Rong Ge's research is supported in part by NSF Award DMS-2031849 and CCF-1845171 (CAREER).

\bibliography{papers}

\begin{thebibliography}{40}
\providecommand{\natexlab}[1]{#1}
\providecommand{\url}[1]{\texttt{#1}}
\expandafter\ifx\csname urlstyle\endcsname\relax
  \providecommand{\doi}[1]{doi: #1}\else
  \providecommand{\doi}{doi: \begingroup \urlstyle{rm}\Url}\fi

\bibitem[Achiam et~al.(2023)Achiam, Adler, Agarwal, Ahmad, Akkaya, Aleman,
  Almeida, Altenschmidt, Altman, Anadkat, et~al.]{achiam2023gpt}
Josh Achiam, Steven Adler, Sandhini Agarwal, Lama Ahmad, Ilge Akkaya,
  Florencia~Leoni Aleman, Diogo Almeida, Janko Altenschmidt, Sam Altman,
  Shyamal Anadkat, et~al.
\newblock Gpt-4 technical report.
\newblock \emph{arXiv preprint arXiv:2303.08774}, 2023.

\bibitem[Ahn et~al.(2023)Ahn, Cheng, Daneshmand, and Sra]{ahn2023transformers}
Kwangjun Ahn, Xiang Cheng, Hadi Daneshmand, and Suvrit Sra.
\newblock Transformers learn to implement preconditioned gradient descent for
  in-context learning.
\newblock \emph{arXiv preprint arXiv:2306.00297}, 2023.

\bibitem[Aky{\"u}rek et~al.(2022)Aky{\"u}rek, Schuurmans, Andreas, Ma, and
  Zhou]{akyurek}
Ekin Aky{\"u}rek, Dale Schuurmans, Jacob Andreas, Tengyu Ma, and Denny Zhou.
\newblock What learning algorithm is in-context learning? investigations with
  linear models.
\newblock \emph{arXiv preprint arXiv:2211.15661}, 2022.

\bibitem[Aky{\"u}rek et~al.(2024)Aky{\"u}rek, Wang, Kim, and
  Andreas]{Akyurek2024-gp}
Ekin Aky{\"u}rek, Bailin Wang, Yoon Kim, and Jacob Andreas.
\newblock {In-Context} language learning: Architectures and algorithms.
\newblock \emph{arXiv preprint arXiv:2401.12973}, 2024.

\bibitem[Anil et~al.(2023)Anil, Dai, Firat, Johnson, Lepikhin, Passos, Shakeri,
  Taropa, Bailey, Chen, et~al.]{anil2023palm}
Rohan Anil, Andrew~M Dai, Orhan Firat, Melvin Johnson, Dmitry Lepikhin,
  Alexandre Passos, Siamak Shakeri, Emanuel Taropa, Paige Bailey, Zhifeng Chen,
  et~al.
\newblock Palm 2 technical report.
\newblock \emph{arXiv preprint arXiv:2305.10403}, 2023.

\bibitem[Bai et~al.(2023)Bai, Chen, Wang, Xiong, and Mei]{bai2023transformers}
Yu~Bai, Fan Chen, Huan Wang, Caiming Xiong, and Song Mei.
\newblock Transformers as statisticians: Provable in-context learning with
  in-context algorithm selection.
\newblock \emph{arXiv preprint arXiv:2306.04637}, 2023.

\bibitem[Brown et~al.(2020)Brown, Mann, Ryder, Subbiah, Kaplan, Dhariwal,
  Neelakantan, Shyam, Sastry, Askell, et~al.]{brown2020language}
Tom Brown, Benjamin Mann, Nick Ryder, Melanie Subbiah, Jared~D Kaplan, Prafulla
  Dhariwal, Arvind Neelakantan, Pranav Shyam, Girish Sastry, Amanda Askell,
  et~al.
\newblock Language models are few-shot learners.
\newblock \emph{Advances in neural information processing systems},
  33:\penalty0 1877--1901, 2020.

\bibitem[Chan et~al.(2022)Chan, Santoro, Lampinen, Wang, Singh, Richemond,
  McClelland, and Hill]{chan2022data}
Stephanie Chan, Adam Santoro, Andrew Lampinen, Jane Wang, Aaditya Singh, Pierre
  Richemond, James McClelland, and Felix Hill.
\newblock Data distributional properties drive emergent in-context learning in
  transformers.
\newblock \emph{Advances in Neural Information Processing Systems},
  35:\penalty0 18878--18891, 2022.

\bibitem[Cheng et~al.(2023)Cheng, Chen, and Sra]{cheng2023transformers}
Xiang Cheng, Yuxin Chen, and Suvrit Sra.
\newblock Transformers implement functional gradient descent to learn
  non-linear functions in context.
\newblock \emph{arXiv preprint arXiv:2312.06528}, 2023.

\bibitem[Cherkassky \& Ma(2003)Cherkassky and Ma]{cherkassky2003comparison}
Vladimir Cherkassky and Yunqian Ma.
\newblock Comparison of model selection for regression.
\newblock \emph{Neural computation}, 15\penalty0 (7):\penalty0 1691--1714,
  2003.

\bibitem[Choromanski et~al.(2020)Choromanski, Likhosherstov, Dohan, Song, Gane,
  Sarlos, Hawkins, Davis, Mohiuddin, Kaiser, et~al.]{choromanski2020rethinking}
Krzysztof Choromanski, Valerii Likhosherstov, David Dohan, Xingyou Song,
  Andreea Gane, Tamas Sarlos, Peter Hawkins, Jared Davis, Afroz Mohiuddin,
  Lukasz Kaiser, et~al.
\newblock Rethinking attention with performers.
\newblock \emph{arXiv preprint arXiv:2009.14794}, 2020.

\bibitem[Fu et~al.(2023)Fu, Chen, Jia, and Sharan]{fu2023transformers}
Deqing Fu, Tian-Qi Chen, Robin Jia, and Vatsal Sharan.
\newblock Transformers learn higher-order optimization methods for in-context
  learning: A study with linear models.
\newblock \emph{arXiv preprint arXiv:2310.17086}, 2023.

\bibitem[Garg et~al.(2022)Garg, Tsipras, Liang, and Valiant]{garg2022can}
Shivam Garg, Dimitris Tsipras, Percy~S Liang, and Gregory Valiant.
\newblock What can transformers learn in-context? a case study of simple
  function classes.
\newblock \emph{Advances in Neural Information Processing Systems},
  35:\penalty0 30583--30598, 2022.

\bibitem[Giannou et~al.(2023)Giannou, Rajput, Sohn, Lee, Lee, and
  Papailiopoulos]{giannou2023looped}
Angeliki Giannou, Shashank Rajput, Jy-yong Sohn, Kangwook Lee, Jason~D Lee, and
  Dimitris Papailiopoulos.
\newblock Looped transformers as programmable computers.
\newblock \emph{arXiv preprint arXiv:2301.13196}, 2023.

\bibitem[Guo et~al.(2023)Guo, Hu, Mei, Wang, Xiong, Savarese, and
  Bai]{guo2023transformers}
Tianyu Guo, Wei Hu, Song Mei, Huan Wang, Caiming Xiong, Silvio Savarese, and
  Yu~Bai.
\newblock How do transformers learn in-context beyond simple functions? a case
  study on learning with representations.
\newblock \emph{arXiv preprint arXiv:2310.10616}, 2023.

\bibitem[Hendel et~al.(2023)Hendel, Geva, and Globerson]{hendel2023context}
Roee Hendel, Mor Geva, and Amir Globerson.
\newblock In-context learning creates task vectors.
\newblock \emph{arXiv preprint arXiv:2310.15916}, 2023.

\bibitem[Huang et~al.(2023)Huang, Cheng, and Liang]{huang2023context}
Yu~Huang, Yuan Cheng, and Yingbin Liang.
\newblock In-context convergence of transformers.
\newblock \emph{arXiv preprint arXiv:2310.05249}, 2023.

\bibitem[Hubinger et~al.(2019)Hubinger, van Merwijk, Mikulik, Skalse, and
  Garrabrant]{mesa}
Evan Hubinger, Chris van Merwijk, Vladimir Mikulik, Joar Skalse, and Scott
  Garrabrant.
\newblock Risks from learned optimization in advanced machine learning systems.
\newblock \emph{arXiv preprint arXiv:1906.01820}, 2019.

\bibitem[Jiang et~al.(2023)Jiang, Sablayrolles, Mensch, Bamford, Chaplot,
  Casas, Bressand, Lengyel, Lample, Saulnier, et~al.]{jiang2023mistral}
Albert~Q Jiang, Alexandre Sablayrolles, Arthur Mensch, Chris Bamford,
  Devendra~Singh Chaplot, Diego de~las Casas, Florian Bressand, Gianna Lengyel,
  Guillaume Lample, Lucile Saulnier, et~al.
\newblock Mistral 7b.
\newblock \emph{arXiv preprint arXiv:2310.06825}, 2023.

\bibitem[Katharopoulos et~al.(2020)Katharopoulos, Vyas, Pappas, and
  Fleuret]{katharopoulos2020transformers}
Angelos Katharopoulos, Apoorv Vyas, Nikolaos Pappas, and Fran{\c{c}}ois
  Fleuret.
\newblock Transformers are rnns: Fast autoregressive transformers with linear
  attention.
\newblock In \emph{International conference on machine learning}, pp.\
  5156--5165. PMLR, 2020.

\bibitem[Kossen et~al.(2023)Kossen, Rainforth, and Gal]{kossen2023context}
Jannik Kossen, Tom Rainforth, and Yarin Gal.
\newblock In-context learning in large language models learns label
  relationships but is not conventional learning.
\newblock \emph{arXiv preprint arXiv:2307.12375}, 2023.

\bibitem[Li et~al.(2023)Li, Ildiz, Papailiopoulos, and
  Oymak]{li2023transformers}
Yingcong Li, Muhammed~Emrullah Ildiz, Dimitris Papailiopoulos, and Samet Oymak.
\newblock Transformers as algorithms: Generalization and stability in
  in-context learning.
\newblock In \emph{International Conference on Machine Learning}, pp.\
  19565--19594. PMLR, 2023.

\bibitem[Mahankali et~al.(2023)Mahankali, Hashimoto, and Ma]{mahankali2023one}
Arvind Mahankali, Tatsunori~B Hashimoto, and Tengyu Ma.
\newblock One step of gradient descent is provably the optimal in-context
  learner with one layer of linear self-attention.
\newblock \emph{arXiv preprint arXiv:2307.03576}, 2023.

\bibitem[Mirchandani et~al.(2023)Mirchandani, Xia, Florence, Ichter, Driess,
  Arenas, Rao, Sadigh, and Zeng]{mirchandani2023large}
Suvir Mirchandani, Fei Xia, Pete Florence, Brian Ichter, Danny Driess,
  Montserrat~Gonzalez Arenas, Kanishka Rao, Dorsa Sadigh, and Andy Zeng.
\newblock Large language models as general pattern machines.
\newblock \emph{arXiv preprint arXiv:2307.04721}, 2023.

\bibitem[Olsson et~al.(2022)Olsson, Elhage, Nanda, Joseph, DasSarma, Henighan,
  Mann, Askell, Bai, Chen, et~al.]{olsson2022context}
Catherine Olsson, Nelson Elhage, Neel Nanda, Nicholas Joseph, Nova DasSarma,
  Tom Henighan, Ben Mann, Amanda Askell, Yuntao Bai, Anna Chen, et~al.
\newblock In-context learning and induction heads.
\newblock \emph{arXiv preprint arXiv:2209.11895}, 2022.

\bibitem[Pathak et~al.(2023)Pathak, Sen, Kong, and Das]{pathak2023transformers}
Reese Pathak, Rajat Sen, Weihao Kong, and Abhimanyu Das.
\newblock Transformers can optimally learn regression mixture models.
\newblock \emph{arXiv preprint arXiv:2311.08362}, 2023.

\bibitem[Schlag et~al.(2021)Schlag, Irie, and Schmidhuber]{schlag2021linear}
Imanol Schlag, Kazuki Irie, and J{\"u}rgen Schmidhuber.
\newblock Linear transformers are secretly fast weight programmers.
\newblock In \emph{International Conference on Machine Learning}, pp.\
  9355--9366. PMLR, 2021.

\bibitem[Shen et~al.(2023)Shen, Mishra, and Khashabi]{shen2023pretrained}
Lingfeng Shen, Aayush Mishra, and Daniel Khashabi.
\newblock Do pretrained transformers really learn in-context by gradient
  descent?
\newblock \emph{arXiv preprint arXiv:2310.08540}, 2023.

\bibitem[Tarzanagh et~al.(2023)Tarzanagh, Li, Zhang, and
  Oymak]{tarzanagh2023max}
Davoud~Ataee Tarzanagh, Yingcong Li, Xuechen Zhang, and Samet Oymak.
\newblock Max-margin token selection in attention mechanism.
\newblock In \emph{Thirty-seventh Conference on Neural Information Processing
  Systems}, 2023.

\bibitem[Team et~al.(2023)Team, Anil, Borgeaud, Wu, Alayrac, Yu, Soricut,
  Schalkwyk, Dai, Hauth, et~al.]{team2023gemini}
Gemini Team, Rohan Anil, Sebastian Borgeaud, Yonghui Wu, Jean-Baptiste Alayrac,
  Jiahui Yu, Radu Soricut, Johan Schalkwyk, Andrew~M Dai, Anja Hauth, et~al.
\newblock Gemini: a family of highly capable multimodal models.
\newblock \emph{arXiv preprint arXiv:2312.11805}, 2023.

\bibitem[Tian et~al.(2023{\natexlab{a}})Tian, Wang, Chen, and Du]{tian2023scan}
Yuandong Tian, Yiping Wang, Beidi Chen, and Simon Du.
\newblock Scan and snap: Understanding training dynamics and token composition
  in 1-layer transformer.
\newblock \emph{arXiv preprint arXiv:2305.16380}, 2023{\natexlab{a}}.

\bibitem[Tian et~al.(2023{\natexlab{b}})Tian, Wang, Zhang, Chen, and
  Du]{tian2023joma}
Yuandong Tian, Yiping Wang, Zhenyu Zhang, Beidi Chen, and Simon Du.
\newblock Joma: Demystifying multilayer transformers via joint dynamics of mlp
  and attention.
\newblock In \emph{NeurIPS 2023 Workshop on Mathematics of Modern Machine
  Learning}, 2023{\natexlab{b}}.

\bibitem[Vaswani et~al.(2017)Vaswani, Shazeer, Parmar, Uszkoreit, Jones, Gomez,
  Kaiser, and Polosukhin]{vaswani2017attention}
Ashish Vaswani, Noam Shazeer, Niki Parmar, Jakob Uszkoreit, Llion Jones,
  Aidan~N Gomez, {\L}ukasz Kaiser, and Illia Polosukhin.
\newblock Attention is all you need.
\newblock \emph{Advances in neural information processing systems}, 30, 2017.

\bibitem[von Oswald et~al.(2023{\natexlab{a}})von Oswald, Niklasson, Randazzo,
  Sacramento, Mordvintsev, Zhmoginov, and Vladymyrov]{oswald}
Johannes von Oswald, Eyvind Niklasson, Ettore Randazzo, Jo{\~a}o Sacramento,
  Alexander Mordvintsev, Andrey Zhmoginov, and Max Vladymyrov.
\newblock Transformers learn in-context by gradient descent.
\newblock In \emph{International Conference on Machine Learning}, pp.\
  35151--35174. PMLR, 2023{\natexlab{a}}.

\bibitem[von Oswald et~al.(2023{\natexlab{b}})von Oswald, Niklasson, Schlegel,
  Kobayashi, Zucchet, Scherrer, Miller, Sandler, Vladymyrov, Pascanu,
  et~al.]{von2023uncovering}
Johannes von Oswald, Eyvind Niklasson, Maximilian Schlegel, Seijin Kobayashi,
  Nicolas Zucchet, Nino Scherrer, Nolan Miller, Mark Sandler, Max Vladymyrov,
  Razvan Pascanu, et~al.
\newblock Uncovering mesa-optimization algorithms in transformers.
\newblock \emph{arXiv preprint arXiv:2309.05858}, 2023{\natexlab{b}}.

\bibitem[Wang et~al.(2020)Wang, Li, Khabsa, Fang, and Ma]{wang2020linformer}
Sinong Wang, Belinda~Z Li, Madian Khabsa, Han Fang, and Hao Ma.
\newblock Linformer: Self-attention with linear complexity.
\newblock \emph{arXiv preprint arXiv:2006.04768}, 2020.

\bibitem[Wei et~al.(2023)Wei, Wei, Tay, Tran, Webson, Lu, Chen, Liu, Huang,
  Zhou, et~al.]{wei2023larger}
Jerry Wei, Jason Wei, Yi~Tay, Dustin Tran, Albert Webson, Yifeng Lu, Xinyun
  Chen, Hanxiao Liu, Da~Huang, Denny Zhou, et~al.
\newblock Larger language models do in-context learning differently.
\newblock \emph{arXiv preprint arXiv:2303.03846}, 2023.

\bibitem[Wen et~al.(2023)Wen, Li, Liu, and Risteski]{wen2023transformers}
Kaiyue Wen, Yuchen Li, Bingbin Liu, and Andrej Risteski.
\newblock Transformers are uninterpretable with myopic methods: a case study
  with bounded dyck grammars.
\newblock In \emph{Thirty-seventh Conference on Neural Information Processing
  Systems}, 2023.

\bibitem[Yadlowsky et~al.(2023)Yadlowsky, Doshi, and
  Tripuraneni]{yadlowsky2023pretraining}
Steve Yadlowsky, Lyric Doshi, and Nilesh Tripuraneni.
\newblock Pretraining data mixtures enable narrow model selection capabilities
  in transformer models.
\newblock \emph{arXiv preprint arXiv:2311.00871}, 2023.

\bibitem[Zhang et~al.(2023)Zhang, Frei, and Bartlett]{zhang2023trained}
Ruiqi Zhang, Spencer Frei, and Peter~L Bartlett.
\newblock Trained transformers learn linear models in-context.
\newblock \emph{arXiv preprint arXiv:2306.09927}, 2023.

\end{thebibliography}
\bibliographystyle{tmlr}

\newpage
\newpage
\appendix
\onecolumn
\section{Proofs from Sections~\ref{sec:linearupdate} and ~\ref{sec:power}}

\subsection{Proof of Theorem~\ref{thm:mainlinear}}

We first give the proof for Theorem~\ref{thm:mainlinear}. In the process we will also prove Lemma~\ref{lem:mainupdate}, as Theorem~\ref{thm:mainlinear} follows immediately from an induction based on the lemma.

\begin{proof}
    We do this by induction. At $l = 0$, it's easy to check that we can set $a^{(0)} = 1, w^{(0)} = 0, M^{(0)} = I, u^{(0)}=0$.
    
    Suppose this is true for some layer $l$, if the weights of layer $l$ are $(\tu{P}_1, \tu{Q}_1), ..., (\tu{P}_k, \tu{Q}_k)$ for $k$ heads, at output of layer $l+1$ we have:
    \begin{equation}
        \mcol{\tp{x}_i}{\tp{y}_i}  = \mcol{\tu{x}_i}{\tu{y}_i} + \sum_{k=1}^p \left[\tu{P}_k \sum_{j=1}^n \left(\mcol{\tu{x}_j}{\tu{y}_j} ((\tu{x}_j)^\top, \tu{y}_j)\right) \tu{Q}_k\right]\mcol{\tu{x}_i}{\tu{y}_i}.\label{eqn:update}
    \end{equation}
    Note that the same equation is true for $i = n+1$ just by letting $y_{n+1} = 0$. Let the middle matrix has the following structure:
    \[
    \left(\begin{array}{cc} A & b \\ c^\top & d\end{array}\right) := \sum_{k=1}^p \left[\tu{P}_k \sum_{j=1}^n \left(\mcol{\tu{x}_j}{\tu{y}_j} ((\tu{x}_j)^\top, \tu{y}_j)\right) \tu{Q}_k\right],
    \]
    Then one can choose the parameters of the next layer as in Lemma~\ref{lem:mainupdate}
    \begin{align*}
        \tp{M} & = (I+A)\tu{M} + b(\tu{w})^\top \\
        \tp{u} & = (I+A)\tu{u} + \tu{a} b\\
        \tp{a} & = (1+d)\tu{a} + \inner{c,\tu{u}} \\
        \tp{w} & = (1+d)\tu{w} - (\tu{M})^\top c.
    \end{align*}
    One can check that this choice satisfies \eqref{eqn:update}.
\end{proof}

\subsection{Proof of Lemma~\ref{lem:diagonalupdate}}

This lemma is in fact a corollary of Lemma~\ref{lem:mainupdate}. We first give a more detailed version which explicitly state the unknown matrices $\tu{\Lambda},\tu{\Gamma},\tu{\Pi},\tu{\Phi}$:

\begin{lemma} \label{lem:diagonalupdate_appendix}
    In the setup of Theorem~\ref{thm:mainlinear} with diagonal parameters \eqref{e:diagonal_w},
    one can recursively compute matrices $\tu{u}, \tu{w}$ using the following formula
    \begin{align*}
        \tp{u} & = \left((1+\tu{\w}_{xy}(\tu{a})^2\rho)I+\tu{\w}_{xx}\tu{\Sigma}\right)\tu{u} \\
        &+ \tu{a}\tu{\w}_{xy} \left(\tu{M} + \tu{a}\tu{u}(w^*)^\top\right)\Sigma \left(\tu{a}w^*-\tu{w}\right),\\
        \tp{w} & = (1+\tu{\w}_{yy}\tu{\lambda}) \tu{w} \\ &- \tu{\w}_{yx} (\tu{M})^\top  (\tu{M} + \tu{a}\tu{u}(w^*)^\top)\Sigma (\tu{a}w^*-\tu{w})  \\ &- \tu{a}\rho \tu{\w}_{yx} (\tu{M})^\top \tu{u},
    \end{align*}
    where $\rho = \sum_{i=1}^n r_i^2$ and initial conditions $a^{0} = 1, w^{0} = 0, M^{0} = I, u^{0}=0$
\end{lemma}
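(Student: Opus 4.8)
The plan is to read Lemma~\ref{lem:diagonalupdate_appendix} off Lemma~\ref{lem:mainupdate}: the only genuinely new work is (i) evaluating the block matrix $\left(\begin{array}{cc}\tu{A}&\tu{b}\\(\tu{c})^\top&\tu{d}\end{array}\right)$ of Lemma~\ref{lem:mainupdate} under the diagonal parametrization \eqref{e:diagonal}, and (ii) re-expressing the layer-$l$ moments $\tu{\alpha},\tu{\lambda}$ that then appear in terms of the original data via Theorem~\ref{thm:mainlinear}. For (i), by definition $\sum_{j=1}^n \mcol{\tu{x}_j}{\tu{y}_j}\left((\tu{x}_j)^\top,\tu{y}_j\right)=\msqr{\tu{\Sigma}}{\tu{\alpha}}{(\tu{\alpha})^\top}{\tu{\lambda}}$, and since the $\tu{P}_k,\tu{Q}_k$ are block-diagonal, multiplying on either side and summing over heads collapses everything into the four scalars of \eqref{e:diagonal_w}:
\[
\msqr{\tu{A}}{\tu{b}}{(\tu{c})^\top}{\tu{d}}=\msqr{\tu{\w}_{xx}\tu{\Sigma}}{\tu{\w}_{xy}\tu{\alpha}}{\tu{\w}_{yx}(\tu{\alpha})^\top}{\tu{\w}_{yy}\tu{\lambda}}.
\]
(This is the same calculation that produces the diagonal layer updates \eqref{e:diagonal_updates}.) Feeding $\tu{A}=\tu{\w}_{xx}\tu{\Sigma}$, $\tu{b}=\tu{\w}_{xy}\tu{\alpha}$, $\tu{c}=\tu{\w}_{yx}\tu{\alpha}$, $\tu{d}=\tu{\w}_{yy}\tu{\lambda}$ into the $\tp{u},\tp{w}$ lines of Lemma~\ref{lem:mainupdate} gives the intermediate form $\tp{u}=\left(I+\tu{\w}_{xx}\tu{\Sigma}\right)\tu{u}+\tu{a}\tu{\w}_{xy}\tu{\alpha}$ and $\tp{w}=\left(1+\tu{\w}_{yy}\tu{\lambda}\right)\tu{w}-\tu{\w}_{yx}(\tu{M})^\top\tu{\alpha}$.

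For (ii), substitute $\tu{x}_i=\tu{M}x_i+y_i\tu{u}$ and $\tu{y}_i=\tu{a}y_i-\inner{\tu{w},x_i}$ from Theorem~\ref{thm:mainlinear} into $\tu{\alpha}=\sum_i\tu{y}_i\tu{x}_i$ and $\tu{\lambda}=\sum_i(\tu{y}_i)^2$, expand, and simplify the resulting sums using the OLS splitting $y_i=\inner{w^*,x_i}+r_i$ together with the normal equation $\sum_i r_ix_i=0$; concretely this uses $\sum_i y_ix_i=\Sigma w^*$, $\sum_i\inner{\tu{w},x_i}x_i=\Sigma\tu{w}$, $\sum_i y_i\inner{\tu{w},x_i}=(w^*)^\top\Sigma\tu{w}$, and $\sum_i y_i^2=(w^*)^\top\Sigma w^*+\rho$ with $\rho=\sum_i r_i^2$. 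After regrouping the surviving quadratic forms in $w^*$ and $\tu{w}$ into the single ``gradient'' factor $\Sigma(\tu{a}w^*-\tu{w})$, one obtains $\tu{\alpha}$ as a preconditioned gradient term (preconditioner built from $\tu{M}$ and a rank-one $\tu{u}(w^*)^\top$ piece) plus the residual term $\tu{a}\rho\,\tu{u}$, which is exactly the shape occurring inside the statement; a shorter parallel computation gives $\tu{\lambda}=(\tu{a}w^*-\tu{w})^\top\Sigma(\tu{a}w^*-\tu{w})+(\tu{a})^2\rho$, though the statement only needs the expansion of $\tu{\alpha}$ and leaves $\tu{\lambda},\tu{\Sigma}$ symbolic.

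Substituting the expansion of $\tu{\alpha}$ back into the intermediate expressions finishes the proof: in $\tp{u}$, the contribution $\tu{a}\tu{\w}_{xy}\cdot(\tu{a}\rho\,\tu{u})=\tu{\w}_{xy}(\tu{a})^2\rho\,\tu{u}$ is precisely what merges with $\tu{u}$ into the coefficient $\left(1+\tu{\w}_{xy}(\tu{a})^2\rho\right)I+\tu{\w}_{xx}\tu{\Sigma}$, and the rest becomes the $\Sigma(\tu{a}w^*-\tu{w})$ term; in $\tp{w}$, the term $-\tu{\w}_{yx}(\tu{M})^\top\tu{\alpha}$ splits into the preconditioned gradient term and the separate $-\tu{a}\rho\,\tu{\w}_{yx}(\tu{M})^\top\tu{u}$ term. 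This is the claimed formula, and Lemma~\ref{lem:diagonalupdate} follows at once by naming the resulting objects $\tu{\Lambda},\tu{\Gamma},\tu{s},\tu{\Pi},\tu{\Phi}$.

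The delicate step is the second one: tracking which sums vanish by orthogonality of the OLS residual, and --- when folding the leftover quadratic forms in $w^*,\tu{w}$ into a single $\Sigma(\tu{a}w^*-\tu{w})$ --- keeping the scalar prefactors straight ($\tu{a}$ versus $(\tu{a})^2$, and where $\rho$ attaches). Step (i) and the final collection are routine algebra.
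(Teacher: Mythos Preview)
Your proposal is correct and follows essentially the same route as the paper: compute the block matrix of Lemma~\ref{lem:mainupdate} under the diagonal parametrization to get $\tu{A}=\tu{\w}_{xx}\tu{\Sigma}$, $\tu{b}=\tu{\w}_{xy}\tu{\alpha}$, $\tu{c}=\tu{\w}_{yx}\tu{\alpha}$, $\tu{d}=\tu{\w}_{yy}\tu{\lambda}$, then expand $\tu{\alpha}$ via Theorem~\ref{thm:mainlinear} and the OLS decomposition $y_i=\inner{w^*,x_i}+r_i$ (using $\sum_i r_i x_i=0$) to obtain the preconditioned-gradient-plus-residual form, and substitute back. Your observation that only $\tu{\alpha}$ needs expanding while $\tu{\lambda},\tu{\Sigma}$ stay symbolic is exactly how the paper proceeds as well.
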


\begin{proof}
    First, we compute the following matrix that appeared in Lemma~\ref{lem:mainupdate} for the specific diagonal case:
    \begin{align*}
     \left(\begin{array}{cc} \tu{A} & \tu{b} \\ (\tu{c})^\top & \tu{d}\end{array}\right) & =    \sum_{k=1}^p \left[\tu{P}_k \sum_{j=1}^n \left(\mcol{\tu{x}_j}{\tu{y}_j} ((\tu{x}_j)^\top, \tu{y}_j)\right) \tu{Q}_k\right], \\
     & = \msqr{\tu{\w}_{xx} \tu{\Sigma}}{\tu{\w}_{xy}\tu{\alpha}}{\tu{\w}_{yx}(\tu{\alpha})^\top}{\tu{\w}_{yy}\tu{\lambda}}.
    \end{align*}
    This implies that $\tu{A} = \tu{\w}_{xx} \tu{\Sigma}$, $\tu{b} = \tu{\w}_{xy}\tu{\alpha}$, $\tu{c} = \tu{\w}_{yx}\tu{\alpha}$ and $\tu{d} = \tu{\w}_{yy}\tu{\lambda}$. Next we rewrite $\tu{\alpha}$:
    \begin{align*}
        \tu{\alpha} &= \sum_{i=1}^n \tu{y}\tu{x} \\
        & = \sum_{i=1}^n (\tu{a}y_i - \inner{\tu{w}, x_i}) (\tu{M}x_i + y_i\tu{u}) \\
        & = \sum_{i=1}^n (\tu{a}r_i + \inner{\tu{a}w^*-\tu{w}, x_i}) ((\tu{M}+\tu{a}\tu{u}(w^*)^\top)x_i + r_i\tu{u}) \\
        & = \sum_{i=1}^n \inner{\tu{a}w^*-\tu{w}, x_i}(\tu{M}+\tu{a}\tu{u}(w^*)^\top)x_i + \sum_{i=1}^n \tu{a}r_i^2 \tu{u} \\
        & = (\tu{M}+\tu{a}\tu{u}(w^*)^\top)\Sigma (\tu{a}w^*-\tu{w}) + \tu{a}\rho\tu{u}.
    \end{align*}
    Here the first step is by Theorem~\ref{thm:mainlinear}, the second step replaces $y_i$ with $\inner{w^*,x_i}+r_i$, the third step uses the fact that $\sum_{i=1}^n r_i x_i = 0$ to get rid of the cross terms.
    
    The remaining proof just substitutes the formula for $\tu{\alpha}$ into Lemma~\ref{lem:mainupdate}.
    
\end{proof}

Now Lemma~\ref{lem:diagonalupdate_appendix} implies Lemma~\ref{lem:diagonalupdate} immediately by setting $\tu{\Lambda} = -\tu{\w}_{xy}(\tu{a})^2\rho I-\tu{\w}_{xx}\tu{\Sigma}$, $\tu{\Gamma} = \tu{a}\tu{\w}_{xy} \left(\tu{M} + \tu{a}\tu{u}(w^*)^\top\right)$, $\tu{s} = tu{\w}_{yy}\tu{\lambda}$, $\tu{\Pi} = \tu{\w}_{yx} (\tu{M})^\top  (\tu{M} + \tu{a}\tu{u}(w^*)^\top)$ and $\tu{\Phi} = \tu{a}\rho \tu{\w}_{yx} (\tu{M})^\top$.

\subsection{Proof for Theorem~\ref{thm:noquadratic}}

\begin{proof}
By Theorem~\ref{thm:mainlinear}, we know $\tu{y}_i = \inner{\tu{w},x_i}$ for some $\tu{w}$. When $n\gg d$, with high probability the norm of $\tu{y}$ is on the order of $\Theta(\sqrt{n})\|\tu{w}\|$, and the norm of $y^*$ is $\Theta(\sqrt{n})$. Therefore we only need to bound the correlation. The correlation is equal to
\begin{equation*}
    |\inner{y^*,\tu{y}}| = |\tu{w}_1 \sum_{i=1}^n x_1^3 + \sum_{i=1}^n x_i(1)^2 \sum_{j=2}^d \tu{w}_j x_i(j)|.
\end{equation*}
We know with high probability $|\sum_{i=1}^n x_i^3| = O(\sqrt{n})$ because $\E[x_i^3] = 0$. The second term can be written as $\inner{\tu{w}, v}$ where $v$ is a vector whose coordinates are $v_1 = 0$ and $v_j = \sum_{i=1}^n x_i(1)^2 x_i(j)$ for $2\le j\le d$, therefore with high probability $\|v\| = O(\sqrt{nd})$. Therefore, with high probability the cosine similarity is at most
\begin{align*}
    \frac{|\inner{y^*,\tu{y}}|}{\|y^*\|\|\tu{y}\|} & = O(1) \frac{|\inner{y^*,\tu{y}}|}{n\|\tu{w}} \\
    &= O(1)\frac{|\tu{w}_1 \sum_{i=1}^n x_1^3 + \sum_{i=1}^n x_i(1)^2 \sum_{j=2}^d \tu{w}_j x_i(j)|}{n\|\tu{w}} \\
    & \le O(1)\frac{|\tu{w}_1||\sum_{i=1}^n x_1^3| + |\tu{w}|\|v\|}{n\|\tu{w}} \\
    & \le O(1)\frac{\sqrt{n}+\sqrt{nd}}{n}.
\end{align*}

When $n\gg d$ this can be made smaller than any fixed constant.
\end{proof}

\subsection{Proof for Theorem~\ref{thm:2ndorder}}

In this section we prove Theorem~\ref{thm:2ndorder} by finding hyperparameters for \gdpp{} algorithm that solves least squares problems with very high accuracy. The first steps in the construction iteratively makes the data $x_i$'s better conditioned, and the last step is a single step of gradient descent. The proof is based on several lemma, first we observe that if the data is very well-conditioned, then one-step gradient descent solves the problem accurately:

\begin{lemma}\label{lem:finalgd}
Given $(x_1,y_1),...,(x_n,y_n)$ where $\Sigma:= \sum_{i=1}^n x_ix_i^\top$ has eigenvalues between $1$ and $1+\epsilon$. Let $w^*:= \arg\min_w \sum_{i=1}^n (y_i-\inner{w,x_i})^2$ be the optimal least squares solution, then $\hat{w} = \sum_{i=1}^n y_i x_i$ satisfies $\|\hat{w}-w^*\| \le \epsilon \|w^*\|$.
\end{lemma}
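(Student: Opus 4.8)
## Proof proposal for Lemma~\ref{lem:finalgd}

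The plan is to exploit the closed-form characterization of the least squares solution recalled in the preliminaries: $w^* = \Sigma^{-1}\alpha$ where $\alpha = \sum_{i=1}^n y_i x_i$. Observing that the proposed estimator $\hat{w} = \sum_{i=1}^n y_i x_i$ is precisely $\alpha$ itself, we immediately get the identity $\hat{w} = \Sigma w^*$, and therefore
\[
\hat{w} - w^* = (\Sigma - I) w^*.
\]
This is the whole content of the lemma; everything else is a one-line spectral estimate.

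Next I would bound the operator norm of $\Sigma - I$. By hypothesis the eigenvalues of $\Sigma$ lie in $[1, 1+\epsilon]$, and since $\Sigma$ is symmetric so is $\Sigma - I$, whose eigenvalues then lie in $[0,\epsilon]$. Hence $\|\Sigma - I\|_{\mathrm{op}} \le \epsilon$. Combining this with the displayed identity gives $\|\hat{w} - w^*\| = \|(\Sigma-I)w^*\| \le \|\Sigma - I\|_{\mathrm{op}}\,\|w^*\| \le \epsilon\|w^*\|$, which is the claimed bound.

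There is essentially no obstacle here: the only subtlety worth a sentence is that $\Sigma$ is invertible (its smallest eigenvalue is at least $1 > 0$), so $w^* = \Sigma^{-1}\alpha$ is well-defined and unique, and the gradient-descent step $\hat w = \alpha$ is exactly one step from the origin with unit step size, i.e. $\hat w = w^0 + \nabla$-type update. The main role of this lemma in the larger argument for Theorem~\ref{thm:2ndorder} is as the terminal step: once the earlier preconditioning layers have driven the data covariance to within a multiplicative $(1+\epsilon)$ of the identity, this lemma certifies that a single further gradient-descent layer (the update on $y$ in \eqref{e:gdpp}) produces a prediction $\inner{\hat w, x_t}$ with $|\inner{\hat w, x_t} - \inner{w^*, x_t}| \le \|\hat w - w^*\|\|x_t\| \le \epsilon\|x_t\|\|w^*\|$ by Cauchy--Schwarz.
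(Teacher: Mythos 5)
Your proposal is correct and matches the paper's argument: the paper also derives the identity $\hat{w} = \Sigma w^*$ (via the residual orthogonality $\sum_i r_i x_i = 0$, which is the same normal-equations fact you use through $w^* = \Sigma^{-1}\alpha$) and then concludes with the same spectral bound $\|(\Sigma - I)w^*\| \le \epsilon\|w^*\|$. Your added remarks on invertibility of $\Sigma$ and the role of the lemma in Theorem~\ref{thm:2ndorder} are accurate but not needed for the proof itself.
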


\begin{proof}
    We can write $y_i = \inner{x_i,w^*}+r_i$. By the fact that $w^*$ is the optimal solution we know $r_i$'s satisfy $\sum_{i=1}^n r_i x_i = 0$. Therefore $\hat{w} = \sum_{i=1}^n y_i x_i = \sum_{i=1}^n \inner{x_i,w^*} x_i = \Sigma w^*$. This implies
    \[
    \|\hat{w}-w^*\| = \|(\Sigma - I)w^*\| \le \|\Sigma - I\|\|w^*\| \le \epsilon \|w^*\|.
    \]
\end{proof}

Next we show that by applying just the preconditioning step of \gdpp{}, one can get a well-conditioned $x$ matrix very quickly. Note that the $\Sigma$ matrix is updated as $\Sigma\leftarrow (I-\gamma \Sigma)\Sigma (I-\gamma \Sigma)$, so an eigenvalue of $\lambda$ in the original $\Sigma$ matrix would become $\lambda (1-\gamma\lambda)^2$. The following lemma shows that this transformation is effective in shrinking the condition number

\begin{lemma}\label{lem:improvekappa}
    Suppose $\nu/\mu = \kappa \ge 1.1$, then there exists an universal constant $c < 1$ such that choosing $\gamma \nu = 1/3$ implies
    \[
    \frac{\max_{\lambda\in[\nu,\mu]} \lambda(1-\gamma \lambda)^2}{\min_{\lambda\in[\nu,\mu]} \lambda(1-\gamma \lambda)^2} \le c\kappa.
    \]
    On the other hand, if $\nu/\mu = \kappa \le 1+\epsilon$ where $\epsilon \le 0.1$, then choosing $\gamma \nu = 1/3$ implies
    \[
    \frac{\max_{\lambda\in[\nu,\mu]} \lambda(1-\gamma \lambda)^2}{\min_{\lambda\in[\nu,\mu]} \lambda(1-\gamma \lambda)^2} \le 1+2\epsilon^2.
    \]
\end{lemma}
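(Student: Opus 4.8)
The plan is to analyze the scalar map $g(\lambda) = \lambda(1-\gamma\lambda)^2$ on the interval $[\mu,\nu]$ under the normalization $\gamma\nu = 1/3$ (so $\gamma\lambda$ ranges over $[1/(3\kappa)\cdot(\kappa/\kappa),\,1/3]$, i.e.\ $\gamma\mu = 1/(3\kappa)$ and $\gamma\nu = 1/3$). Rescaling by $\gamma$, it suffices to bound the ratio of $\max$ to $\min$ of $h(t) = t(1-t)^2$ over $t\in[1/(3\kappa),1/3]$, since the factor $\gamma$ cancels in the ratio. The key elementary facts about $h$ on $[0,1/3]$: $h$ is increasing there (its derivative $h'(t) = (1-t)(1-3t)$ is nonnegative for $t\le 1/3$), $h(0)=0$, and $h(1/3) = (1/3)(4/9) = 4/27$. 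So on our interval both $\max$ and $\min$ are attained at the endpoints: $\max = h(1/3) = 4/27$ and $\min = h(1/(3\kappa)) = \frac{1}{3\kappa}\left(1-\frac{1}{3\kappa}\right)^2$. Hence the ratio equals
\[
\frac{h(1/3)}{h(1/(3\kappa))} = \frac{4/27}{\frac{1}{3\kappa}(1-\tfrac{1}{3\kappa})^2} = \frac{4\kappa/9}{(1-\tfrac{1}{3\kappa})^2} = \kappa\cdot \frac{4/9}{(1-\tfrac{1}{3\kappa})^2}.
\]

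For the first claim, I would set $c := \frac{4/9}{(1-\tfrac{1}{3\cdot 1.1})^2}$ (using monotonicity of $(1-\tfrac{1}{3\kappa})^{-2}$ in $\kappa$ to see this is the worst case over $\kappa\ge 1.1$) and just check numerically that $c < 1$: $1-\tfrac{1}{3.3}\approx 0.6970$, $(0.6970)^2 \approx 0.4858$, and $(4/9)/0.4858 \approx 0.915 < 1$. Since $\frac{4/9}{(1-\tfrac{1}{3\kappa})^2}$ is decreasing in $\kappa$, for all $\kappa\ge 1.1$ the ratio is at most $c\kappa$ with this universal $c<1$.

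For the second claim, write $\kappa = 1+\epsilon$ with $\epsilon\le 0.1$. Now $\gamma\mu = \frac{1}{3(1+\epsilon)}$ and $\gamma\nu = \frac13$, so $t$ ranges over the short interval $[\frac{1}{3(1+\epsilon)},\frac13]$ near $1/3$, where $h$ is still increasing; thus the ratio is again $h(1/3)/h(\tfrac{1}{3(1+\epsilon)})$. The cleanest route is to note $h(1/3)$ is the global maximum of $h$ on $[0,1]$ with $h'(1/3)=0$, so a second-order Taylor expansion of $h$ at $t=1/3$ gives $h(1/3) - h(t) = \tfrac{|h''(1/3)|}{2}(t-1/3)^2 + O((t-1/3)^3)$ with $h''(t) = 6t-4$, so $h''(1/3) = -2$ and $h(1/3)-h(t)\le (t-1/3)^2(1+o(1))$. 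Since $|t-1/3| = \frac13 - \frac{1}{3(1+\epsilon)} = \frac{\epsilon}{3(1+\epsilon)} \le \epsilon/3$, we get $h(1/3) - h(t) \lesssim \epsilon^2/9$, and dividing by $h(t)\ge h(1/4) = 3/16$ (say, since $t\ge 1/(3.3) > 1/4$) yields a ratio bound of the form $1 + O(\epsilon^2)$; the remaining work is to make the constants explicit and verify the bound is at most $1+2\epsilon^2$ for $\epsilon\le 0.1$ (the crude estimates above give roughly $1 + \tfrac{16}{9\cdot 3}\epsilon^2 \approx 1+0.6\epsilon^2$, comfortably within $1+2\epsilon^2$, but a careful direct computation of $h(1/3)/h(\tfrac{1}{3(1+\epsilon)})$ avoids any Taylor remainder bookkeeping).

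The only mildly delicate point — not really an obstacle — is making sure the endpoint analysis is valid, i.e.\ that under $\gamma\nu = 1/3$ the whole interval $[\gamma\mu,\gamma\nu]$ lies in the region $t\le 1/3$ where $h$ is monotone increasing, so that extrema are at endpoints rather than interior critical points; this is immediate since $\gamma\mu = \gamma\nu/\kappa < \gamma\nu = 1/3$. After that, both parts reduce to verifying explicit one-variable inequalities, which I would do by the monotonicity-plus-numerics argument sketched above rather than heavy estimation.
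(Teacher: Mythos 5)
Your proposal is correct and follows essentially the same route as the paper: rescale to $h(t)=t(1-t)^2$, use monotonicity on $[0,1/3]$ to reduce everything to the endpoint ratio $\kappa\cdot\frac{4/9}{(1-\frac{1}{3\kappa})^2}$, take $c$ to be its value at $\kappa=1.1$ (about $0.915$) for the first claim, and expand near $\kappa=1$ for the second (the paper expands the exact ratio in $\epsilon$, you Taylor-expand $h$ at its critical point $t=1/3$; equivalent bookkeeping, and since $h$ is a cubic the expansion is even exact). One trivial slip: $h(1/4)=9/64$, not $3/16$, but with the corrected constant your estimate still gives roughly $1+0.8\epsilon^2\le 1+2\epsilon^2$, so nothing breaks.
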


The first claim shows that one can reduce the condition number by a constant factor in every step until it's a small constant. The second claim shows that once the condition number is small ($1+\epsilon$), each iteration can bring it much closer to 1 (to the order of $1+O(\epsilon^2)$).

Now we prove the lemma.
\begin{proof}
    First, notice that the function $f(x) = x(1-\gamma x)^2$ is monotonically nondecreasing for $x\in [0,\nu]$ if $\gamma \nu = 1/3$ (indeed, it's derivative $f'(x) = (1-\gamma x)(1-3\gamma x)$ is always nonnegative). Therefore, the max is always achieved at $x = \nu$ and the min is always achieved at $x = \mu$. The new ratio is therefore
    \[
    \frac{\nu(1-\gamma \nu)^2}{\mu(1-\gamma \mu)^2} = \kappa \frac{4/9}{(1-1/3\kappa)^2}.
    \]
    When $\kappa \ge 1.1$ the ratio $\frac{4/9}{(1-1/3\kappa)^2}$ is always below $\frac{4/9}{(1-1/3.3)^2}$ which is a constant bounded away from 1.
    
    When $\kappa = 1+\epsilon < 1.1$, we can write down the RHS in terms of $\epsilon$
    \[
    \frac{\nu(1-\gamma \nu)^2}{\mu(1-\gamma \mu)^2} = \kappa \frac{4/9}{(1-1/3\kappa)^2} =(1+\epsilon)(1+\frac{1}{2}(1-\frac{1}{1+\epsilon}))^{-2}.
    \]
    Note that by the careful choice of $\gamma$, the RHS has the following Taylor expansion:
    \[
    (1+\epsilon)(1+\frac{1}{2}(1-\frac{1}{1+\epsilon}))^{-2} = 1+\frac{3\epsilon^2}{4} - \frac{5\epsilon^3}{4} + O(\epsilon^4).
    \]
    One can then check the RHS is always upperbounded by $2\epsilon^2$ when $\epsilon < 0.1$.
    
\end{proof}

With the two lemmas we are now ready to prove the main theorem:

\begin{proof}
    By Lemma~\ref{lem:improvekappa} we know in $O(\log \kappa + \log\log 1/\epsilon)$ iterations, by assigning $\kappa$ in the way of Lemma~\ref{lem:improvekappa} one can reduce the condition number of $x$ to $\kappa' \le 1+\epsilon/2\kappa$ (we chose $\epsilon/2\kappa$ here to give some slack for later analysis).
    
    Let $\Sigma'$ be the covariance matrix after these iterations, and $\nu',\mu'$ be the upper and lowerbound for its eigenvalues. The data $x_i$'s are transformed to a new data $x_i' = Mx_i$ for some matrix $M$. Let $M = A\Sigma^{-1/2}$, then since $M' = AA^\top$ we know $A$ is a matrix with singular values between $\sqrt{\mu'}$ and $\sqrt{\nu'}$. The optimal solution $(w^*)' = M^{-\top} w^*$ has norm at most $\sqrt{\nu}/\sqrt{\mu'}\|w^*\|$. Therefore by Lemma~\ref{lem:finalgd} we know the one-step gradient step with $\hat{w} = \sum_{i=1}^n \frac{1}{\mu'} y_i x_i$ satisfy $\|\hat{w} - (w^*)'\| \le (\kappa'-1)\sqrt{\nu}/\sqrt{\mu'}\|w^*\|$. The test data $x_t$ is also transformed to $x_t' = A\Sigma^{-1/2} x_t$, and the algorithm outputs $\inner{\hat{w},x_t'}$, so the error is at most $\sqrt{\nu}\|w^*\|*\|x_t'\| \le (\kappa'-1)\sqrt{\kappa}\sqrt{\kappa'}\|w^*\|*\|x_t\|$. By the choice of $\kappa'$ we can check that RHS is at most $\epsilon \|w^*\|\|x_t\|$.
\end{proof}

\subsection{Proof of the Theorem \ref{thm:adaptivescaling}}
\begin{proof}
The key observation here is that when $n\to\infty$, under the assumptions we have $\lim_{n\to\infty} \sum_{i=1}^n x_ix_i^\top = I$. Therefore the ridge regression solutions converge to $w^*_{\sigma^2} = \frac{1}{1+\sigma^2} \sum_{i=1}^n y_i x_i$ and the desired output is $\inner{w^*_{\sigma^2}, x_q}$. 

By the calculations before, we know after the first-layer, the implicit $w$ is $w^1 = \w_{yx}\sum_{i=1}^n y_i x_i$. As long as $\w_{yx}$ is a constant, when $n\to\infty$ we know $\frac{1}{n}\sum_{i=1}^n (y^1_i)^2 = \sigma^2$ (as the part of $y$ that depend on $x$ is negligible compared to noise), therefore the output of the second layer satisfies
\[
w^2 = (1+n\sigma^2 \w_{yy})w^1 = (1+n\sigma^2 \w_{yy}) \w_{yx}\sum_{i=1}^n y_i x_i.
\]
Therefore, as long as we choose $\w_{yx}$ and $\w_{yy}$ to satisfy $(1+n\sigma^2 \w_{yy}) \w_{yx} = \frac{1}{1+\sigma^2}$ when $\sigma = \sigma_1$ or $\sigma_2$ (notice that these are two linear equations on $\w_{yx}$ and $n \w_{yx}\w_{yy}$, so they always have a solution), then we have $\lim_{n\to\infty} w^2 = w^*_{\sigma^2}$ for the two noise levels.
\end{proof}

\section{More experiments}

Here we provide results of additional experiments that did not make it to the main text.

Fig.~\ref{fig:per_layer_decrease_across} shows an example of unadjusted loss. Clearly, it is virtually impossible to compare the methods across various noise levels this way.

Fig.~\ref{fig:eval_noise_uniform_per_layer} shows per-variance profile of intermediate predictions of the network of varying depth. It appears that \gdpp{} demonstrates behavior typical of GD-based algorithms: early iterations model higher noise (similar to early stopping), gradually converging towards lower noise predictions. \diag{} exhibits this patter initially, but then dramatically improves, particularly for lower noise ranges. Intriguingly, \full{} displays the opposite trend, first improving low-noise predictions, followed by a decline in higher noise prediction accuracy, especially in the last layer.

Finally, Table~\ref{tab:estimate_uniform_across} presents comprehensive numerical results for our experiments across various mixed noise variance models. For each model variant (represented by a column), the best-performing result is highlighted in bold.

\begin{figure*}
  \centering
    \includegraphics[width=1\textwidth]{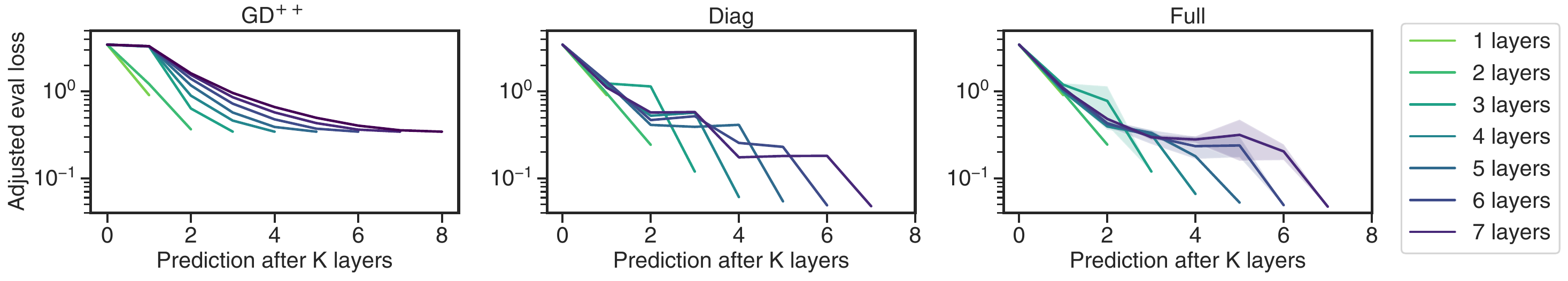}
    \vspace{-4ex}
    \caption{Linear transformer models show a consistent decrease in error per layer when trained on data with mixed noise variance $\sigma_\tau\sim U(0,5)$. The error bars measure variance over $5$ training seeds.}
    \label{fig:per_layer_decrease}
\vspace{-2ex}    
\end{figure*}

\begin{figure*}
  \centering
    \includegraphics[width=.6\textwidth]{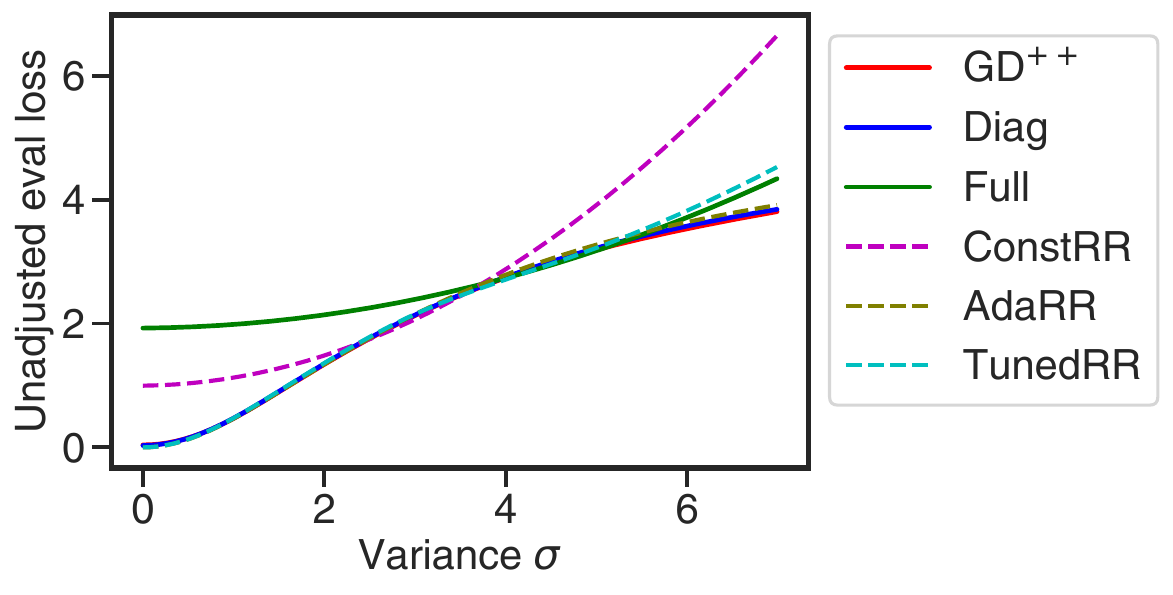}
    \caption{Example of unadjusted loss given by directly minimizing \eqref{e:mixed_noise_loss}. It is pretty hard to see variation between comparable methods using this loss directly.}
    \label{fig:per_layer_decrease_across}
\end{figure*}

\begin{figure*}
  \centering
    \includegraphics[width=1\textwidth]{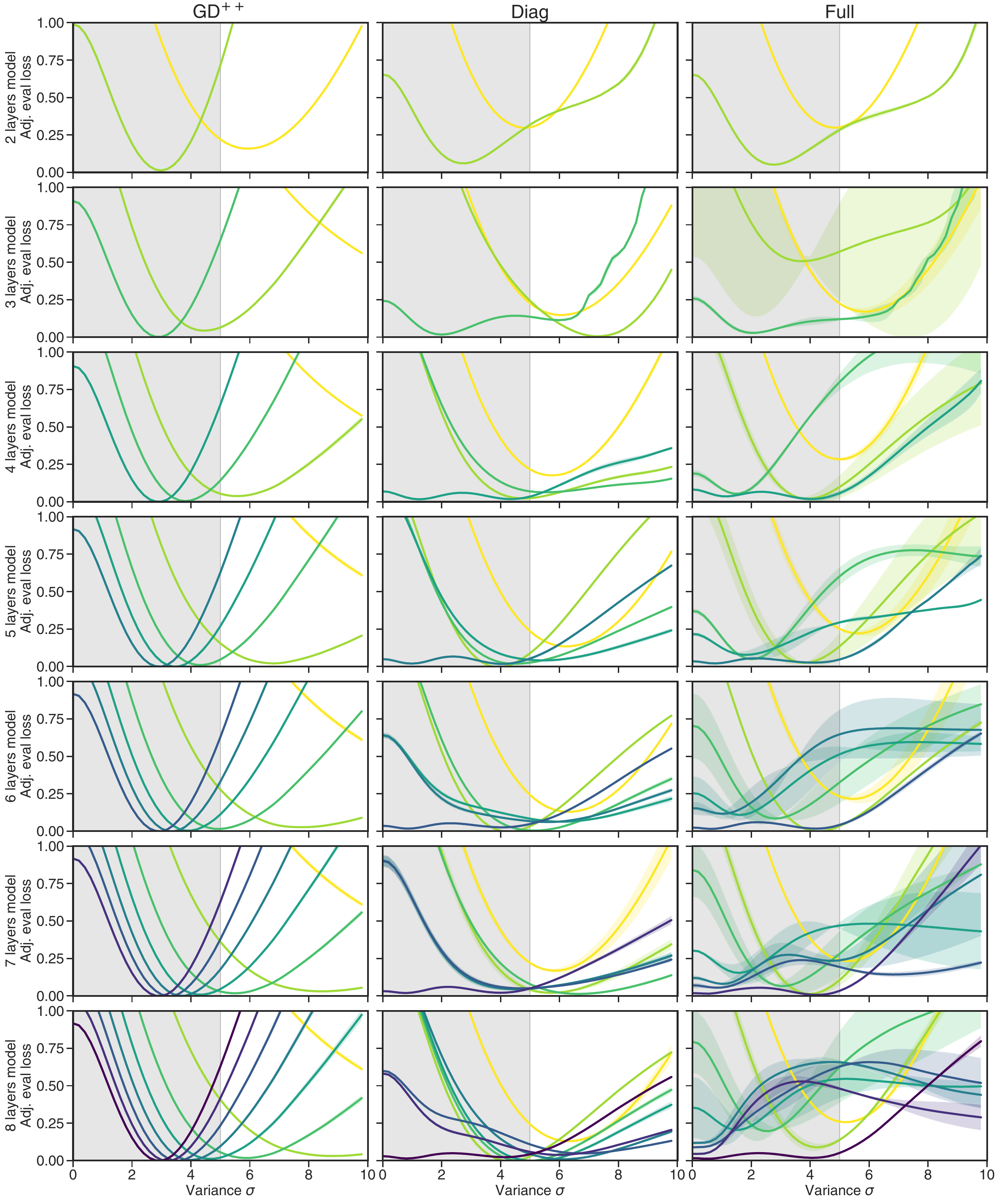}
    \includegraphics[width=0.8\textwidth]{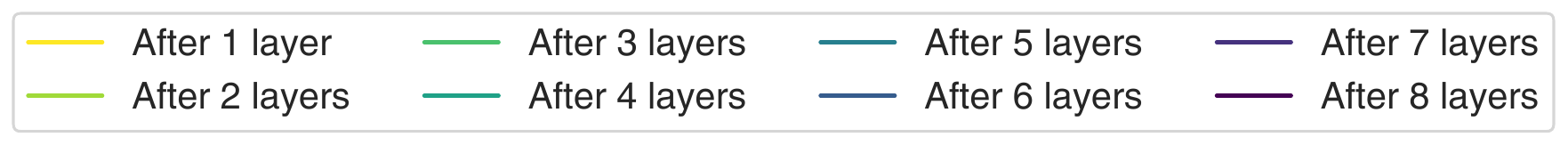}
    \caption{Layer by layer prediction quality for different models with $\sigma_\tau\sim U(0,5)$. The error bars measure std over $5$ training seeds.}
    \label{fig:eval_noise_uniform_per_layer}
\end{figure*}

\begin{table}
    \centering
\begin{tabular}{|c||c|c|c|c|c|c|c|c||c|c|}
\hline
Method & \multicolumn{8}{|c||}{Uniform $\sigma_\tau\sim(0,\sigma_{max}$)}  & \multicolumn{2}{|c|}{Categorical $\sigma_\tau\in S$} \\ \cline{2-11}
 & 0 & 1 & 2 & 3 & 4 & 5 & 6 & 7 & \{1,3\} & \{1,3,5\} \\ \hline\hline
\multicolumn{11}{ |c| }{1 layer} \\ \hline
\gdpp{}& 1.768 & 1.639 & 1.396 & 1.175 & 1.015 & 0.907 & 0.841 & 0.806 & 1.007 & 0.819 \\ \hline
\diag{}& 1.767 & 1.639 & 1.396 & 1.175 & 1.015 & 0.906 & 0.841 & 0.806 & 1.007 & 0.819 \\ \hline
\full{}& 1.768 & 1.640 & 1.397 & 1.176 & 1.016 & 0.907 & 0.842 & 0.806 & 1.008 & 0.820 \\ \hline
\hline
\multicolumn{11}{ |c| }{2 layers} \\ \hline
\gdpp{}& 0.341 & 0.295 & 0.243 & 0.265 & 0.347 & 0.366 & 0.440 & 0.530 & 0.305 & 0.427 \\ \hline
\diag{}& 0.265 & 0.214 & 0.173 & 0.188 & 0.219 & 0.242 & 0.254 & 0.259 & 0.201 & 0.246 \\ \hline
\full{}& 0.264 & 0.215 & 0.173 & 0.188 & 0.220 & 0.245 & 0.259 & 0.263 & 0.202 & 0.276 \\ \hline
\hline
\multicolumn{11}{ |c| }{3 layers} \\ \hline
\gdpp{}& 0.019 & 0.021 & 0.071 & 0.161 & 0.259 & 0.344 & 0.454 & 0.530 & 0.222 & 0.422 \\ \hline
\diag{}& 0.013 & 0.015 & 0.048 & 0.087 & 0.109 & 0.118 & 0.121 & 0.123 & 0.098 & 0.119 \\ \hline
\full{}& 0.012 & 0.015 & 0.049 & 0.075 & 0.101 & 0.117 & 0.124 & 0.127 & 0.076 & 0.113 \\ \hline
\hline
\multicolumn{11}{ |c| }{4 layers} \\ \hline
\gdpp{}& 9.91e-05 & 0.014 & 0.066 & 0.160 & 0.258 & 0.344 & 0.454 & 0.530 & 0.222 & 0.422 \\ \hline
\diag{}& 1.19e-04 & 0.006 & 0.024 & 0.041 & 0.050 & 0.059 & 0.065 & 0.073 & 0.043 & 0.062 \\ \hline
\full{}& 1.63e-04 & 0.005 & 0.021 & 0.038 & 0.052 & 0.065 & 0.068 & 0.076 & 0.032 & 0.061 \\ \hline
\hline
\multicolumn{11}{ |c| }{5 layers} \\ \hline
\gdpp{}& 1.14e-07 & 0.014 & 0.066 & 0.161 & 0.265 & 0.344 & 0.454 & 0.530 & 0.222 & 0.422 \\ \hline
\diag{}& 1.81e-07 & 0.004 & 0.016 & 0.029 & 0.041 & 0.051 & 0.058 & 0.062 & 0.026 & 0.051 \\ \hline
\full{}& 1.79e-07 & \textbf{0.002} & 0.015 & 0.026 & 0.038 & 0.048 & 0.059 & 0.065 & 0.016 & 0.048 \\ \hline
\hline
\multicolumn{11}{ |c| }{6 layers} \\ \hline
\gdpp{}& 2.37e-10 & 0.009 & 0.066 & 0.161 & 0.265 & 0.344 & 0.454 & 0.530 & 0.222 & 0.422 \\ \hline
\diag{}& 2.57e-10 & 0.003 & 0.014 & 0.028 & 0.040 & 0.048 & 0.054 & 0.059 & 0.020 & 0.047 \\ \hline
\full{}& 2.71e-10 & \textbf{0.002} & 0.014 & 0.025 & 0.036 & 0.044 & 0.052 & 0.059 & 0.011 & 0.043 \\ \hline
\hline
\multicolumn{11}{ |c| }{7 layers} \\ \hline
\gdpp{}& 2.65e-12 & 0.009 & 0.066 & 0.161 & 0.265 & 0.344 & 0.454 & 0.530 & 0.222 & 0.422 \\ \hline
\diag{}& 2.50e-12 & \textbf{0.002} & 0.014 & 0.027 & 0.040 & 0.047 & 0.052 & 0.059 & 0.018 & 0.046 \\ \hline
\full{}& 2.50e-12 & \textbf{0.002} & \textbf{0.010} & 0.025 & \textbf{0.035} & \textbf{0.047} & \textbf{0.050} & \textbf{0.057} & \textbf{0.010} & \textbf{0.035} \\ \hline
\hline
\multicolumn{11}{ |c| }{Baselines} \\ \hline
\constrr{}& \textbf{0} & 0.009 & 0.066 & 0.161 & 0.265 & 0.365 & 0.454 & 0.530 & 0.222 & 0.422 \\ \hline
\adarr{}& \textbf{0} & 0.003 & 0.016 & 0.034 & 0.053 & 0.068 & 0.081 & 0.092 & 0.051 & 0.084 \\ \hline
\tunedrr{}& \textbf{0} & \textbf{0.002} & \textbf{0.010} & \textbf{0.023} & 0.037 & 0.049 & 0.060 & 0.068 & 0.021 & 0.054 \\ \hline
\end{tabular}
    \caption{Adjusted evaluation loss for models with various number of layers with uniform noise variance $\sigma_\tau\sim U(0,\sigma_{max})$. We highlight in bold the best results for each problem setup (i.e.\ each column).}
    \label{tab:estimate_uniform_across}
\end{table}

\end{document}